\newcommand{\ba}{\mathbf{a}}\newcommand{\bA}{\mathbf{A}}
\newcommand{\bD}{\mathbf{D}}
\newcommand{\bff}{\mathbf{f}}\newcommand{\bF}{\mathbf{F}} %
\newcommand{\bI}{\mathbf{I}}
\newcommand{\bK}{\mathbf{K}}
\newcommand{\bp}{\mathbf{p}}\newcommand{\bP}{\mathbf{P}}
\newcommand{\bq}{\mathbf{q}}
\newcommand{\br}{\mathbf{r}}
\newcommand{\bT}{\mathbf{T}}
\newcommand{\bu}{\mathbf{u}}
\newcommand{\bX}{\mathbf{X}}
\newcommand{\cD}{\mathcal{D}}
\newcommand{\figref}[1]{Fig.~\ref{#1}}
\newcommand{\secref}[1]{Section~\ref{#1}}
\newcommand{\eqnref}[1]{Eq.~\eqref{#1}}
\newcommand{\tabref}[1]{Table~\ref{#1}}
\DeclareMathOperator*{\argmin}{argmin~}
\DeclareRobustCommand\onedot{\futurelet\@let@token\@onedot}
\def\@onedot{\ifx\@let@token.\else.\null\fi\xspace}
\def\eg{e.g\onedot} 
\def\ie{i.e\onedot}
\def\vs{vs\onedot}
\def\wrt{wrt\onedot}
\def\etal{et~al\onedot}
\newcommand{\boldparagraph}[1]{\vspace{0.2cm}\noindent{\bf #1:}}
\definecolor{darkgreen}{rgb}{0,0.7,0}
\definecolor{darkblue}{rgb}{0,0,0.7}
\title{Learning Neural Light Transport}
\author{\normalsize{
Paul Sanzenbacher$^{1,2}$ \qquad Lars Mescheder$^{1,2,3}$ \qquad Andreas Geiger$^{1,2}$
}}
\date{\normalsize{
$^1$Max Planck Institute for Intelligent Systems Tübingen\\
$^2$University of Tübingen\qquad
$^3$Amazon Tübingen\footnote{This work was done prior to joining Amazon.}\\
\texttt{\{firstname.lastname\}@tue.mpg.de}
}}
\newacronym{mse}{MSE}{mean squared error}
\newacronym{cnn}{CNN}{convolutional neural network}
\newacronym{gan}{GAN}{generative adversarial network}
\newacronym{vae}{VAE}{variational autoencoder}
\newacronym{mssim}{MSSIM}{mean structural similarity index}
\newacronym{fid}{FID}{Fr\'echet inception distance}
\newtheorem{lemma}{Lemma}
\begin{document}

\maketitle

\begin{abstract}
In recent years, deep generative models have gained significance due to their ability to synthesize natural-looking images with applications ranging from virtual reality to data augmentation for training computer vision models. While existing models are able to faithfully learn the image distribution of the training set, they often lack controllability as they operate in 2D pixel space and do not model the physical image formation process. In this work, we investigate the importance of 3D reasoning for photorealistic rendering. We present an approach for learning light transport in static and dynamic 3D scenes using a neural network with the goal of predicting photorealistic images. In contrast to existing approaches that operate in the 2D image domain, our approach reasons in both 3D and 2D space, thus enabling global illumination effects and manipulation of 3D scene geometry. Experimentally, we find that our model is able to produce photorealistic renderings of static and dynamic scenes. Moreover, it compares favorably to baselines which combine path tracing and image denoising at the same computational budget.
\end{abstract}

\section{Introduction}

Photorealistic rendering is a core problem in graphics and vision.
Algorithms which are able to reason about direct and indirect illumination of  a scene (\ie, global illumination) have become an essential building block for a wide range of applications such as gaming, virtual reality, movies and others.
With the advent of deep learning, synthetic data generation emerged as another important application~\cite{Alhaija2018IJCV,Gaidon2016CVPR,Shrivastava2017CVPR,Dosovitskiy2017CORL,Varol2017CVPR} with the potential to satisfy the notorious data hunger of modern deep learning systems.
However, as modern deep neural networks require large amounts of data, most existing approaches rely on approximate rendering techniques to accelerate training~\cite{Gaidon2016CVPR,Shrivastava2017CVPR,Dosovitskiy2017CORL,Varol2017CVPR}.
Training embodied agents (\ie, using reinforcement learning) poses even stronger demands \wrt simulation time \cite{Espeholt2018ICML}.

\begin{figure}[t]
	\centering
	\includegraphics[width=1\linewidth]{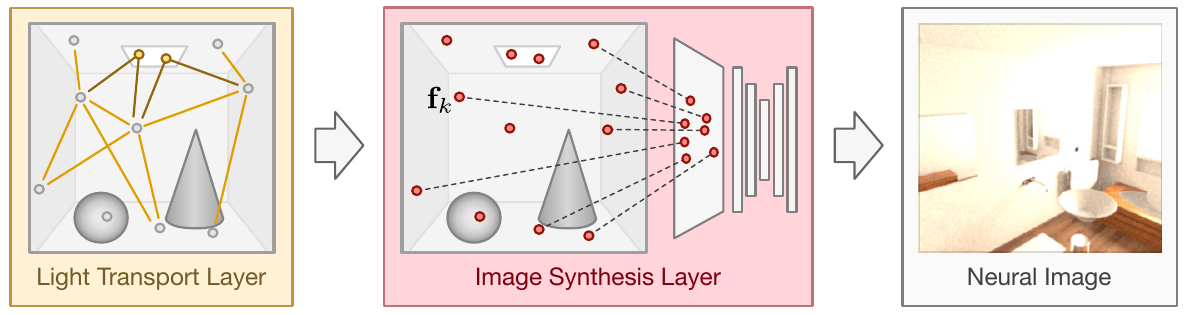}
	\caption{\textbf{Motivation.}
		We learn photorealistic rendering using a 3D Light Transport Layer in combination with a 2D Image Synthesis Layer.
		We demonstrate that our hybrid 3D-2D approach is able to synthesize realistic images with global illumination effects in real-time.
		\vspace{-1em}
	}
	\label{fig:motivation}
\end{figure}

Historically, photorealistic image synthesis is achieved using sampling-based rendering techniques~\cite{Pharr2016,Veach1998}
where the physics of light transport~\cite{Kajiya1986SIGGRAPH} are exploited to transform a physical description of a scene into a realistic image.
However, while physically based rendering yields photorealistic results, it is also notoriously slow with rendering times of up to multiple hours for a single image.

On the other hand, recent advances in deep learning enabled the generation of highly realistic images~\cite{Goodfellow2014NIPS,Mescheder2018ICML,Karras2018ICLR,Karras2019CVPR} in milliseconds on a commercial GPU.
Unfortunately,
most existing approaches make use of rather abstract latent representations which do not allow for precise control over the 3D content.
Moreover, the lack of a holistic scene description limits neural rendering approaches in their ability to render images that are consistent across viewpoints or time.
While some recent works~\cite{Sitzmann2019CVPR,Aliev2019ARXIV,Meshry2019CVPR,Noguchi2019ARXIV} have shown that neural network can produce consistent images for a given scene, these approaches usually 
do not explicitly reason about light transport.
Consequently, these approaches are not able to handle fine-grained geometric scene manipulation and do not integrate illumination into the 3D representation:
Consider a moving light source that is not visible in the current view. While it clearly influences illumination in the scene, an image-based approach can hardly reason about it.

\boldparagraph{Contributions} In this work, we investigate the importance of 3D \vs 2D reasoning for efficient learning-based photorealistic rendering.
Towards this goal, we present a learning-based approach (see \figref{fig:motivation} for a high-level overview) which can predict photorealistic images from a point-cloud based scene representation in real time.
In contrast to existing approaches, our method performs reasoning both in 3D and 2D space which allows for learning the physical light transport in a scene.
We hypothesize that this enables our method to handle scene modifications such as object translations, object removal and lighting changes.
At the same time, our method allows for learning useful heuristics (\eg, shadows that are not affected by moving objects) from the training data, enabling fast rendering without sacrificing quality.
We introduce two variants of our approach: (1) a PointNet-based~\cite{Qi2017CVPR} model and (2) an extension of this model using photon sampling which improves the quality of shadows and specular reflections.
We demonstrate both theoretically as well as empirically that our model can be trained without bias using noisy renderings from a physically based renderer.

\section{Related Work}

\boldparagraph{Rendering}
Physically based rendering is a well-studied field \cite{Veach1998, Pharr2016} where much of the research in recent years focuses on optimizing different parts of the rendering pipeline \cite{Muller2019TOG,Vicini2019TOG,Ren2013TOG, Rainer19CGF} or denoising of fast noisy renderings \cite{Buades2005CVPR, Michael2017HPG, Chaitanya2017SIGGRAPH}.
Moreover, there is a trend of making rendering algorithms differentiable in order to estimate scene properties \cite{Azinovic2019CVPR, Loper2014ECCV, Gkioulekas2013TOG, Gkioulekas2016ECCV} or to use them for training deep neural networks \cite{Valentin2019, NimierDavid2019SIGGRAPHASIA, Li2018TOG, Kato2018CVPR}.
While recent approaches strive to achieve real-time photorealistic rendering \cite{Schied2018PACMCGIT, Schied2017HPG}, they often require additional assumptions such as temporal smoothness and are inherently limited by temporal accumulation of information in screen space.
In this paper, we investigate the suitability of neural networks for learning light transport end-to-end, with the goal of rendering photorealistic images of dynamic scenes in real time.

\boldparagraph{Generative Models}
Recently, deep generative models such as \acp{vae} \cite{Hinton2016Science, Chan2005CPAM, Kingma2014ICLR, Huang2018NIPS} or (conditional) \acp{gan} \cite{Mescheder2018ICML, Karras2018ICLR, Karras2019CVPR, Brock2019ICLR,Isola2017CVPR} have demonstrated that neural networks are capable of generating photorealistic synthetic imagery.
A major limitation of these approaches is that their latent representation is typically rather abstract, making it hard to synthesize consistent images across different perspectives or to manipulate the 3D scene content.
In contrast to the aforementioned approaches,  
our model achieves consistency across viewpoints and scene configurations by exploiting rich 3D representations to approximate light transport.

\boldparagraph{Novel View Synthesis}
Alhaija \etal \cite{Alhaija2018ACCV} and Nalbach \etal \cite{Nalbach2017CGF} describe methods for generating renderings from multiple image buffers such as depth and materials.
While this approach allows for rendering realistic images, a major limitation is that it operates in image space, making it hard to model global illumination correctly.
There also exist several approaches  for novel view synthesis \cite{Park2017CVPR,Tinghui16ECCV,Dosovitskiy2014CVPR, Tatarchenko2016ECCV, Eslami2018Science, Kulkarni2015NIPS, Chen2016NIPS,Worall2017ICCV, Xinchen2016NIPS, Rhodin2018ECCV, Chen2017ICCV, Wang2018CVPRa,Eslami2018Science} that make use of a latent scene representation.
However, since these methods lack a geometric scene representation, it is hard to gain precise control over their output.
In contrast, we learn to render images in a differentiable manner from a \textit{holistic} scene representation.

\boldparagraph{Scene Representations}
Aliev \etal~\cite{Aliev2019ARXIV} propose a neural approach for rendering novel views from point cloud representations \cite{GarciaGarcia2016IJCNN, Qi2017NIPS, Zhang2019ICCV, Hua2018CVPR}.
Meshry \etal~\cite{Meshry2019CVPR} model scenes using point clouds for re-rendering from novel views and under varying appearance.
Hermosilla \etal~\cite{Hermosilla2019CGF} use point clouds for learning abstract features on a scene's surface that can be used for rendering various illumination effects using direct illumination.
While this approach captures complex illumination effects such as subsurface scattering, it is limited to diffuse, homogeneous materials and single objects.
Recently, several alternative scene representations have been considered \cite{Sitzmann2019NIPS,Thies2019TOG, Mescheder2019CVPR, Oechsle2019ICCV, Niemeyer2020CVPR}.
Sitzmann \etal~\cite{Sitzmann2019CVPR} propose \emph{DeepVoxels}, where object-centric static scenes are encoded in a voxel grid of learned features.
While this method works well for rendering a sequence of coherent images, it is limited to compact static scenes due to the high memory requirements of voxel-based representations.
Thies \etal \cite{Thies2019TOG} propose a neural texture representation that can be used for novel view synthesis of objects. %
In contrast to these approaches our model aims to represent multiple scenes as well as scene dynamics by explicitly reasoning about light transport.

\def\argmin#1#2{\underset{#1}{\text{arg min }} #2}
\def\br#1{\left(#1\right)}
\def\estimate#1{\hat{#1}}

\def\Set#1{\left\{#1\right\}}
\def\model{\varphi}
\def\norm#1{\left\lVert#1\right\rVert}
\def\expectsymb{\mathbb{E}}
\def\expectsub#1#2{\expectsymb_{#1}\left[#2\right]}
\def\round#1{\left[#1\right]}
\def\abs#1{\left|#1\right|}
\def\imgnoisy{\mathbf{\hat I}}

\begin{figure*}[t!]
	\includegraphics[width=\textwidth]{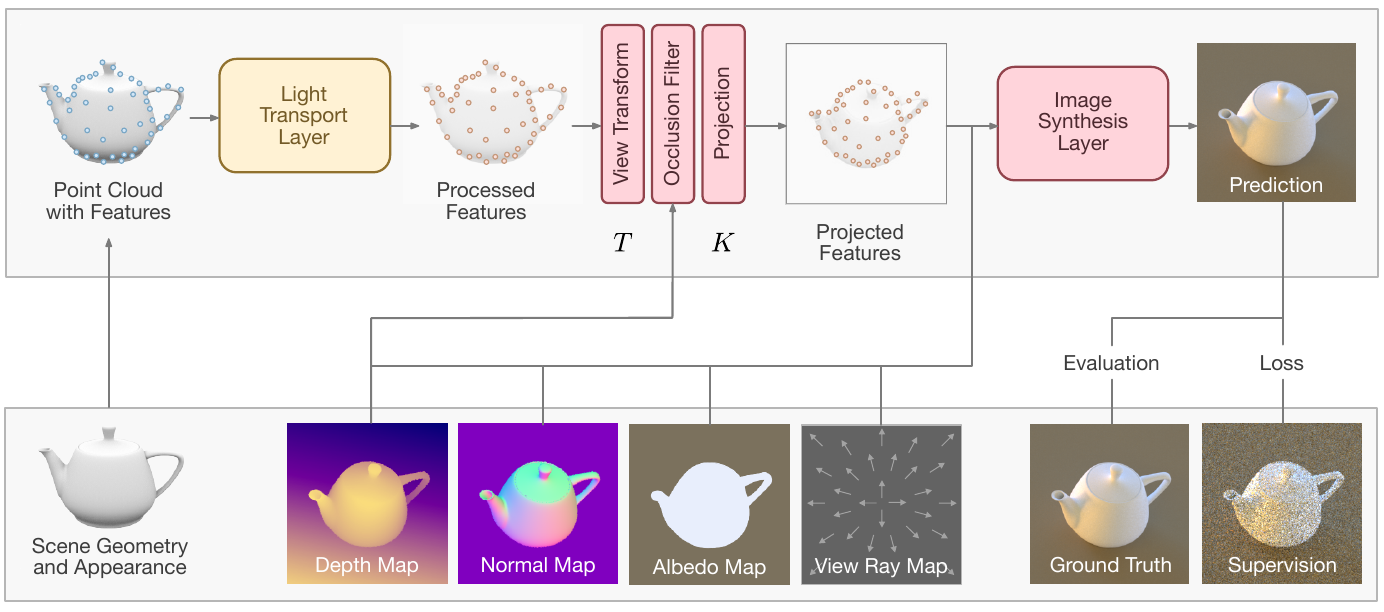}
	\caption{
	\textbf{Model Overview.} 
	Our model samples a point cloud uniformly from the input 3D mesh and associates each point with additional properties (albedo, light spectrum).
	These features are processed using a \textit{Light Transport Layer} which learns to approximate the light transport in the scene.
	The resulting features are projected into the 2D image domain and occluded points
	are removed.
	The final image is synthesized using an \textit{Image Synthesis Layer} that takes the projected features as well as additional image space information as input.
		\vspace{-1em}
	}
	\label{fig:architecture-overview}
\end{figure*}	

\section{Method} \label{sec:method}
	
Our goal is to train a deep neural network $\varphi_\theta$ to render a photorealistic scene specified in terms of a 3D model in real time. We first discuss our scene representation.
Next, we describe our neural rendering architecture, which is able to learn complex illumination effects by exploiting both 3D and 2D information.
Finally, we describe how we train our model using noisy renderings for supervision and show that under moderate assumptions our gradient estimates are unbiased.
An overview of our approach is given in \figref{fig:architecture-overview}.

\boldparagraph{Scene Representation}
How should a 3D scene be represented for efficient and photorealistic rendering?
Traditionally, 3D geometry is often represented in the form of textured 3D meshes.
However, while meshes and texture atlases are compact and encode useful geometric properties, they are inconvenient for neural networks due to their irregular structure.
In contrast, voxel-based representations can be processed conveniently using 3D convolutions, yet they are limited
by their cubic memory requirements.
In this work, we therefore opt for a hybrid 2D-3D representation consisting of both image-space buffers such as albedo, normal and depth maps as well as 3D information.
We represent 3D information in form of an unstructured point cloud sampled from the scene's surface with learned feature embeddings enriched by additional light and material properties.

\boldparagraph{Architecture}
Our neural rendering model comprises three main parts as illustrated in \figref{fig:architecture-overview}: a Light Transport Layer, a 3D-to-2D projection step and an Image Synthesis Layer.
The \textit{Light Transport Layer} models global illumination effects that cannot be modeled in image space:
consider for example a movable lamp which is present in the scene but not visible from the current point of view.
The position, color and intensity of the lamp heavily impact the overall illumination of the scene, but an image-based method, by definition, will fail to reason about these effects.
We therefore propose to reason in the 3D domain.

Our Light Transport Layer takes a set of $N_{\text{surf}}$ randomly sampled 3D surface points $\Set{\bp_j}$ and associated attributes for each point $\Set{\ba_j}$ as input.
These attributes comprise the user-defined material type, the surface albedo, and the light intensity emitted by the point if the point is located on a light-emitting surface.
Our goal is to define an architecture that is able to model or approximate light transport in a scene sufficiently well such that illumination effects like reflections and shadows are predicted correctly.

Towards this goal, we first predict a feature embedding $\mathbf{f}_j$ for each point $\bp_j$ using a PointNet-based architecture \cite{Qi2017CVPR}.
While we found that such a global representation is able to reason about global illumination to some extent, we additionally propose a more explicit model for light transport to model illumination effects more accurately.
Inspired by \emph{photon mapping} \cite{photon-mapping} we sample additional $N_{\text{phot}}$ photon points $\Set{\bq_k}$ from all light sources in the scene.
Photons are randomly cast into the scene and their first intersection with the scene geometry $\Set{\bq_k'}$ are computed. 
For each photon intersection $\bq_k'$ we process the position, color and direction of the initial photon point $\bq_k$ with a fully connected neural network, %
resulting in a feature vector $\mathbf{f}_k$ at $\bq_k'$.
The photon network %
thus encodes information about the light color, intensity and direction which is necessary for photorealistic shading.

Next, we remove occluded points in the 3D scene using the depth map $\bD$ and project the remaining point features $\mathbf{f}_j$ and photon features $\mathbf{f}_k$ onto the image plane using perspective projection $\phi(j) = \round{\bK \bT \bp_j}$ where $\bp_j$ denotes the point location, $\bK$ is the camera matrix and $\bT$ the rigid world-to-view transformation matrix. The resulting 2D feature map $\bF$
is obtained by averaging all points projecting onto the same pixel. Formally, we obtain $\bF_\bu$ at pixel $\bu$ as
\begin{equation} \label{eq:scatter-features}
	\bF_\bu = \begin{cases}
		\frac{1}{\abs{\phi^{-1}(\bu)}}\sum_{j \in \phi^{-1}(\bu)} \bff_j & \text{if } \phi^{-1}(\bu) \neq \emptyset \\
		0, & \text{otherwise}
	\end{cases}
\end{equation}
where $\phi^{-1}(\bu) := \Set{j: \phi(j) = \bu}$ denotes the inverse projection.
We concatenate the resulting feature map $\bF$ with additional image-space information $\bA$, \ie, a depth map, a normal map and an albedo map, which we obtain using OpenGL shaders.
Note that this additional image-space information can be computed cheaply and complements the global scene representation $\bF$ with high-frequency albedo, normal and depth information.
Additionally, we create a view ray map that encodes for each pixel a normalized vector pointing from the camera center to the pixel center in world coordinates.
This information is necessary for learning specular reflection and refraction effects.
The final image synthesis is performed using the \textit{Image Synthesis Layer} which we implement using a conventional U-Net architecture~\cite{Ronneberger2015MICCAI}.

\boldparagraph{Training}
We train our model using a dataset $\cD=\{(\bX_i, \imgnoisy_i)\}$ which comprises pairs of 3D scene representations $\bX_i$ and noisy renderings $\imgnoisy_i$ that are obtained from a physically-based renderer which we run for few iterations.
The input $\bX_i$ consists of a scene represented by a point cloud $\bP_i$, a view represented by a world-to-view transform $\bT_i$ and additional image-space information $\bA_i$.
Our objective is to find a parameter vector $\theta^*$ which minimizes the mean squared error (\ac{mse}) \wrt the model parameters $\theta$:
\begin{equation}
	\theta^* = \argmin{\theta}{  \sum_{i=1}^N  \| \imgnoisy_i - \varphi_\theta(\bT_i, \bP_i, \bA_i) \|^2 }
\end{equation}
Since obtaining clean renderings is very time-consuming, we propose the use of noisy renderings from a physically-based renderer.
A similar technique has recently been used to learn image denoising \cite{Lehtinen2018ICML,Krull2019CVPR}.
Our key insight is that we can exploit the unbiasedness of rendering algorithms like bidirectional path tracing \cite{Lafortune1993} to obtain unbiased gradient estimates.
\begin{lemma}
	Let $\bX$ be an input representation of a scene, $\varphi_\theta$ our rendering network and $\imgnoisy$ a noisy rendering of $\bX$ following a distribution $p(\imgnoisy|\bX)$ which depends on the chosen sampling-based rendering algorithm.
	Assume that the true (noise-free) rendering is given by $\bI(\bX)$. Further assume that the rendering algorithm is unbiased, \ie, $\expectsymb_{\imgnoisy | \bX}[\imgnoisy] = \bI(\bX)$.
	In this case, the following equality holds, \ie, the gradient estimates are unbiased:
	\begin{equation} \label{eq:lemma-unbiased-gradient}
	\expectsub{\imgnoisy | \bX}{ \nabla_{\theta} \|\model_\theta(\bX) - \imgnoisy\|^2 }
	=
	\nabla_{\theta} \|\model_\theta(\bX) - \bI(\bX)\|^2 
	\end{equation}
\end{lemma}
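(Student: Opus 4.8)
The plan is to differentiate the per-sample squared loss explicitly, exploit the fact that the network output $\model_\theta(\bX)$ is deterministic given $\bX$, and then reduce the entire claim to the first moment of $\imgnoisy$ via the unbiasedness hypothesis. First I would write the residual coordinatewise over image pixels/channels indexed by $m$, so that $\norm{\model_\theta(\bX) - \imgnoisy}^2 = \sum_m \br{\model_\theta(\bX)_m - \imgnoisy_m}^2$, and differentiate each scalar summand with respect to $\theta$ by the chain rule:
\begin{equation}
	\nabla_\theta \norm{\model_\theta(\bX) - \imgnoisy}^2 = 2 \sum_m \br{\model_\theta(\bX)_m - \imgnoisy_m}\, \nabla_\theta \model_\theta(\bX)_m .
\end{equation}

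The central observation is that, after this differentiation, the random rendering $\imgnoisy$ appears only \emph{linearly}: both $\model_\theta(\bX)$ and its parameter-Jacobian $\nabla_\theta \model_\theta(\bX)$ depend on $\bX$ and $\theta$ alone and are therefore constants with respect to the conditional expectation $\expectsub{\imgnoisy \mid \bX}{\cdot}$. Taking that expectation, pulling it through the finite sum and past the deterministic factors by linearity, yields
\begin{equation}
	\expectsub{\imgnoisy \mid \bX}{\nabla_\theta \norm{\model_\theta(\bX) - \imgnoisy}^2} = 2 \sum_m \br{\model_\theta(\bX)_m - \expectsub{\imgnoisy \mid \bX}{\imgnoisy_m}} \nabla_\theta \model_\theta(\bX)_m .
\end{equation}
Substituting the unbiasedness assumption $\expectsub{\imgnoisy \mid \bX}{\imgnoisy} = \bI(\bX)$ coordinatewise replaces each $\expectsub{\imgnoisy \mid \bX}{\imgnoisy_m}$ by $\bI(\bX)_m$, and running the chain-rule computation in reverse collapses the right-hand side back into $\nabla_\theta \norm{\model_\theta(\bX) - \bI(\bX)}^2$, which is exactly \eqnref{eq:lemma-unbiased-gradient}.

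I do not expect a genuine technical obstacle here; the result is essentially a one-line consequence of the quadratic form of the loss. The one point worth stating carefully is \emph{why} only the first moment of $\imgnoisy$ enters, and hence why the first-moment (unbiasedness) hypothesis is precisely the right assumption: it is the quadratic loss that renders the gradient affine in $\imgnoisy$, so that higher moments of the renderer noise never appear. Two mild regularity conditions make the manipulation rigorous — that $\model_\theta$ is differentiable in $\theta$ and that $\imgnoisy$ has a finite first moment so the expectations exist. I would also note in passing that, because the sampling distribution $p(\imgnoisy \mid \bX)$ is independent of $\theta$ (the noise is injected by the renderer, not the network), no score-function term of the form $\nabla_\theta \log p(\imgnoisy \mid \bX)$ is generated; this is what distinguishes the present unbiased-gradient statement from the biased estimators one would obtain if the noise distribution itself depended on the parameters.
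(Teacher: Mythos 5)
Your proof is correct, but it takes a genuinely different route from the paper's. The paper first pulls $\nabla_\theta$ out of the expectation, then expands $\norm{\model_\theta(\bX) - \imgnoisy}^2$ via the binomial theorem into $\norm{\model_\theta(\bX)}^2 - 2\langle\model_\theta(\bX), \imgnoisy\rangle + \norm{\imgnoisy}^2$, applies unbiasedness to the cross term, and finally observes that the remaining second-moment term $\expectsub{\imgnoisy|\bX}{\norm{\imgnoisy}^2}$ is constant in $\theta$ and may therefore be swapped for $\norm{\bI(\bX)}^2$ under the gradient. You instead differentiate first, obtaining an expression that is affine in $\imgnoisy$, and then let the conditional expectation act on that affine expression, so that unbiasedness finishes the argument immediately. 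The two approaches buy slightly different things. Yours evaluates the left-hand side of \eqnref{eq:lemma-unbiased-gradient} literally as written (expectation of a gradient), so no interchange of differentiation and expectation is ever needed, and it makes explicit why only the first moment of $\imgnoisy$ can enter — the point you rightly emphasize; it also shows the lemma holds without any finite-second-moment assumption. The paper's version stays at the level of norms and inner products without descending to coordinates, but it does invoke the gradient--expectation interchange (justified only informally by the observation that $p(\imgnoisy|\bX)$ does not depend on $\theta$) and it writes down $\expectsub{\imgnoisy|\bX}{\norm{\imgnoisy}^2}$ as an intermediate quantity, which implicitly presumes it is finite. Your closing remark about the absence of a score-function term is a useful clarification of why the $\theta$-independence of the noise distribution matters, and it is consistent with the paper's (terser) justification of the same step.
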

\begin{proof}
	See supplementary material.
\end{proof}

\boldparagraph{Implementation Details}
For the Light Transport Layer, we use a PointNet-based architecture \cite{Qi2017CVPR} with ResNet-blocks \cite{He2016CVPR} of depth two.
For the Image Synthesis Layer we use a UNet \cite{Ronneberger2015MICCAI} with four downsampling and four upsamling blocks.
The network architecture 
used for photon feature creation is a fully-connected ResNet \cite{He2016CVPR} architecture with two residual blocks consisting of two fully-connected layers each.
The input, hidden and output dimension
is the same as for the PointNet architecture.
For training, we use the Adam optimizer \cite{Kingma2015ICLR} with a learning rate of $5\cdot 10^{-4}$ and a batch size of 128 for static scenes and 32 for dynamic scenes (see~\prettyref{sec:experiments}).
The learning rate is decayed exponentially by multiplying it by a factor of $0.99$ after every epoch.
More details are provided in the supplementary material.

\section{Experiments} \label{sec:experiments}

In our experiments, we investigate the importance of 3D reasoning for learning photorealistic rendering from noisy observations.
We conduct two types of experiments:
In our first set of experiments, we investigate the importance of 3D information and the influence of the different components of the Image Synthesis Layer.
To analyze these properties independently of light transport, we first run our approach on a \textbf{static scene} observed from varying viewpoints.
Our second set of experiments addresses \textbf{dynamic scenes} (moving objects and light sources) using our complete pipeline including the Light Transport Layer.

\boldparagraph{Datasets}
For our experiments on static scenes, we evaluate our approach on a simple static indoor scene containing a table, two light sources and a glass egg \cite{Veach1998}.
Our experiments on dynamic scenes are based on four realistic indoor scenes from \cite{Bittlerli2016}.
We use \emph{Mitsuba} \cite{Jakob2010} for both rendering and point sampling.
Renderings are created using bidirectional path tracing, a modification of path tracing that is unbiased and converges faster \cite{Veach1998}.
For each scene, we create a training set of 100,000 images at a resolution of $256 \times 256$ pixels, varying the camera pose for each training sample.
We sample 10,000 surface points for each scene.
For our experiments on dynamic scenes, we randomly translate or remove objects in addition to varying the camera pose, and sample an additional 10,000 photons and intersections for each scene.
The training data is visualized in the supplementary material.

\boldparagraph{Baselines} 
For our main experiment on dynamic scenes we use three baseline methods: (1) a 2D CNN baseline which predicts images from the image-space input $\bA_i$ alone, 
(2) a denoising approach similar to the model of Lehtinen \etal \cite{Lehtinen2018ICML}, which learns to predict smooth renderings using noisy renderings as input and
(3) a simple feature projection approach similar to Aliev \etal \cite{Aliev2019ARXIV} without Light Transport Layer.
For the denoising approach we trade-off accuracy with run-time by adapting the number of pixels for which we run the bidirectional path tracer.
We report results for $1/1$, $1/4$, $1/16$ and $1/64$ of the total number of image pixels with four samples per pixel, setting all other pixels to black.
For fair comparison, we use the same convolutional architecture for all baselines and our image synthesis layer.

\boldparagraph{Metrics}
For quantitative comparison, we evaluate \ac{mse} and \ac{mssim} \cite{Wang2004TIP} with a window size of $7\times 7$ pixels.
\ac{mse} and \ac{mssim} measure mostly low-level similarity.
To also measure perceptual similarity, we compute the FID \cite{Heusel2017NIPS} and a Feature-L1 distance~\cite{Oechsle2019ICCV} between predicted and ground truth images.
For both the FID and Feature-L1 distance, we use the features of the final average pooling layer of an Inception v3 network~\cite{Szegedy2015CVPR, Szegedy2016CVPR} trained on ImageNet~\cite{Deng2009CVPR}.

\subsection{Ablation Study on Static Scene} \label{sec:ablation-static}

In this section, we conduct experiments on a static scene that does not contain moving objects or light sources.
Our primary goal is to investigate the influence of the different elements of the Image Synthesis Layer as well as the importance of 3D information.

We compare the performance of our model without Light Transport Layer for different input modalities.
\figref{fig:ablation-static} shows the different configurations which are evaluated against each other.
We choose a subset of 6 (out of $2^4=16$) representative configurations to highlight the importance of each input.
While configuration 1, 2 and 3 use only 3D information (but no image space information), configuration 4 and 5 rely solely on image space information.
Finally, configuration 6 combines both 3D and image space information.

\def\veachimagegroup#1{
	\includegraphics[width=.12\textwidth]{compressed/static/teacher/#1.jpg} &
	\includegraphics[width=.12\textwidth]{compressed/static/01/#1.jpg} &
	\includegraphics[width=.12\textwidth]{compressed/static/02/#1.jpg} &
	\includegraphics[width=.12\textwidth]{compressed/static/03/#1.jpg} &
	\includegraphics[width=.12\textwidth]{compressed/static/04/#1.jpg} &
	\includegraphics[width=.12\textwidth]{compressed/static/05/#1.jpg} &
	\includegraphics[width=.12\textwidth]{compressed/static/06/#1.jpg} &
	\includegraphics[width=.12\textwidth]{compressed/static/GT/#1.jpg}
}

\begin{figure*}[t]
	\centering
	\setlength\tabcolsep{1pt} %
	\begin{tabular}{cccccccc}
		\veachimagegroup{08} \\
		\footnotesize \makecell{Teaching\\ Input} &
		\footnotesize 1 & \footnotesize 2 & \footnotesize 3 & \footnotesize 4 & \footnotesize 5 & \footnotesize 6 &
		\footnotesize \makecell{Ground\\ Truth}
	\end{tabular}
	\setlength{\tabcolsep}{9pt}
	\vspace{0.3cm}
	\begin{footnotesize}
		\newline\vspace*{1em}
		\begin{tabular}{l|cccc|ccc}
			\toprule
			config & position & point features & normal map & ray direction map &       MSE & MSSIM &    FID \\
			\midrule
			1 &      yes &        no &         no &   no &  0.0106 &  0.81 &  154.6 \\
			2 &       no &       yes &         no &   no &  0.0107 &  0.80 &  158.6 \\
			3 &      yes &       yes &         no &   no &  0.0108 &  0.81 &  149.4 \\
			4 &       no &        no &        yes &   no &  0.0161 &  0.78 &  138.1 \\
			5 &       no &        no &        yes &  yes &  0.0107 &  0.83 &  124.0 \\
			6 &      yes &       yes &        yes &  yes &  \textbf{0.0084} &  \textbf{0.88} &  \textbf{86.1} \\
			\bottomrule
		\end{tabular}
	\end{footnotesize}
	\vspace{0.3cm}
	\caption{\textbf{Ablation Study on Static Scene.}
		Comparing different input configurations for a static scene.
		The metrics are evaluated on a separate held-out validation set comprising 2048 samples.
		All networks were trained for 200,000 iterations with a batch size of 128. Note how the full model (6) predicts images that are significantly less noisy than the teaching input (left). Additional qualitative results are provided in the supplementary material.
		\label{fig:ablation-static}
	}
\end{figure*}

\boldparagraph{Results}
Configurations 1, 2, 3 and 5 show similar performance in terms of \ac{mse}, while configuration 5, which does not receive any projected point cloud information as input, clearly outperforms the other three configurations in terms of \ac{mssim} and \ac{fid}.
However, surface normal information only yields good results if supplemented by viewpoint information, as becomes evident when comparing configurations 4 and 5.
The most important insight is that all inputs in combination (configuration 6), outperform the other configurations for all metrics by a large margin.
This result supports our initial hypothesis that reasoning in both 3D and 2D is crucial for this task.
\figref{fig:ablation-static} (top) shows a qualitative result.
While configurations 1, 2 and 3 achieve reasonable qualitative results, they also contain several artifacts (\eg, the table) which do not occur in configuration 6.
Configurations 4 and 5 do not exploit 3D information, thus severely degrading visual fidelity. This highlights the importance of 3D information for learning-based rendering.
Configuration 6 which uses both 2D image space as well as 3D information yields the best qualitative results.

\def\staticimagegroup#1{
	\includegraphics[width=.155\linewidth]{compressed/04_static_bathroom/staticbath_#1_00.jpg} &
	\includegraphics[width=.155\linewidth]{compressed/04_static_bathroom/staticbath_#1_01.jpg} &
	\includegraphics[width=.155\linewidth]{compressed/04_static_bathroom/staticbath_#1_02.jpg}
}

\def\imagerow#1#2{
	\includegraphics[width=.12\linewidth]{#1/sv/#2} &
	\includegraphics[width=.12\linewidth]{#1/denoise/#2} &
	\includegraphics[width=.12\linewidth]{#1/denoise64/#2} &
	\includegraphics[width=.12\linewidth]{#1/cnn/#2} &
	\includegraphics[width=.12\linewidth]{#1/projection/#2} &
	\includegraphics[width=.12\linewidth]{#1/pointnet/#2} &
	\includegraphics[width=.12\linewidth]{#1/photon/#2} &
	\includegraphics[width=.12\linewidth]{#1/gt/#2} \\
}

\begin{figure}[t]
	\centering
	\setlength\tabcolsep{1pt} %
	\renewcommand{\arraystretch}{.7}
	\begin{tabular}{cccccccc}
		\imagerow{compressed/d1_256/it00300000}{00067_zoom.jpg}
		\imagerow{compressed/d1_256/it00300000}{00078_zoom.jpg}
		\footnotesize \makecell{Teaching\\ Input} &
		\footnotesize \makecell{Denoising\\  (1/1)} & 
		\footnotesize \makecell{Denoising\\  (1/64)}  & 
		\footnotesize \makecell{CNN only} & 
		\footnotesize \makecell{Feature \\ Projection} & 
		\footnotesize \makecell{Ours w/o\\ Photons} & 
		\footnotesize \makecell{Ours w/\\ Photons} &
		\footnotesize \makecell{Ground\\ Truth} \\
		\vspace{0.01cm}
	\end{tabular}
	\setlength{\tabcolsep}{13pt}
	\renewcommand{\arraystretch}{1}
	\begin{footnotesize}
		\begin{tabular*}{\linewidth}{lccccc}
			\toprule
			Architecture &   time / frame &    MSE ($\downarrow$) &  MSSIM ($\uparrow$) & FID ($\downarrow$) &  Feature L1 ($\downarrow$) \\
			\midrule
			Denoising (1/1) &  1.5059s &  0.0005 &  0.880 &   26.4 &                 0.163 \\
			\midrule
			Denoising (1/64) &  0.0283s &  0.0029 &  0.781 &   94.0 &                 0.281 \\
			CNN only &  0.0191s &  0.0043 &  0.835 &   36.1 &                 0.195 \\
			Feature Projection &  0.0210s &  0.0037 &  0.841 &   32.5 &                 0.185 \\
			Ours (w/o Photons) &  0.0243s &  0.0044 &  0.841 &   31.4 &                 0.184 \\
			Ours (w/ Photons) &  0.0459s &  \textbf{0.0028} &  \textbf{0.849} & \textbf{30.6} &  \textbf{0.182} \\
			\bottomrule
		\end{tabular*}
	\end{footnotesize}
	\caption{\textbf{Dynamic Objects and Fixed Lights.}
		Results on dynamic scenes where objects are modified but light sources kept fixed. We  show the non-real-time denoising baseline ``Denoising (1/1)'' for reference. Additional results are provided in the supplementary material.
	}
	\label{fig:dynamic-scenes}
\end{figure}

\begin{figure}[th!]
	\centering
	\includegraphics[width=0.4\linewidth]{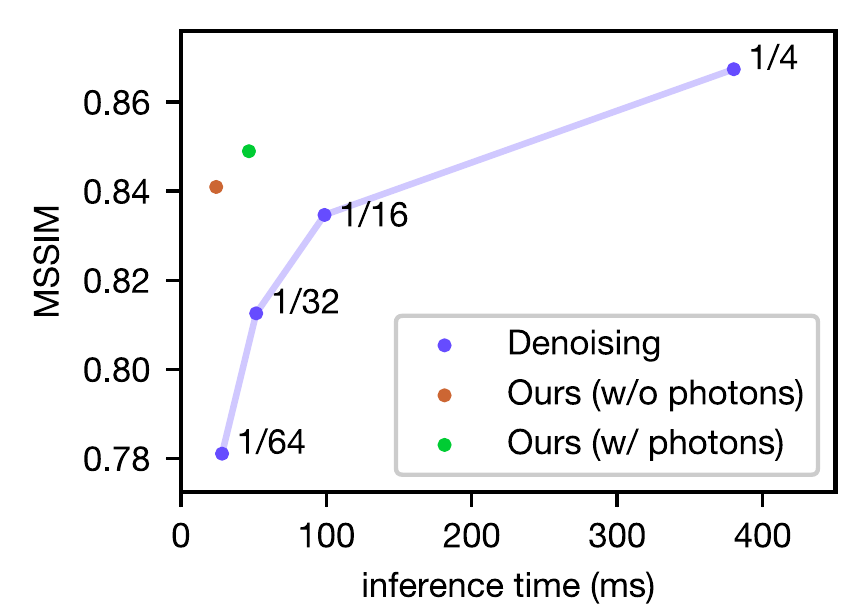}\hspace{0.7cm}%
	\includegraphics[width=0.4\linewidth]{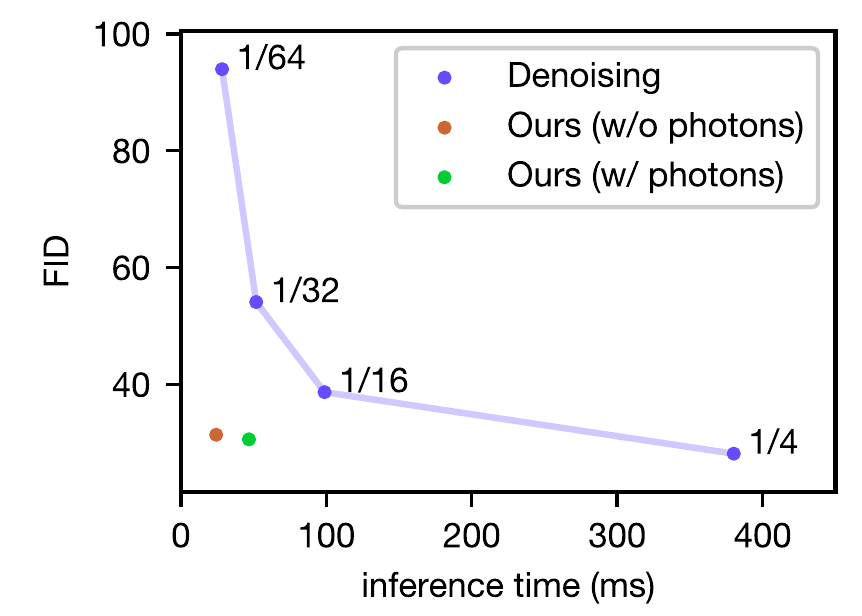}\\%
	\caption{\textbf{Dynamic Objects and Fixed Lights.}
		Quantitative comparison of our approach to the denoising baseline, varying the sample density.
		We plot reconstruction accuracy in terms of MSSIM and FID over inference time.
		Numbers refer to the ratio of dropped pixels.
	}
	\label{fig:denoising-graph}
\end{figure}

\subsection{Results on Dynamic Scenes} \label{sec:dynamic-scenes}

To investigate the utility of 3D reasoning, 
we now turn our attention to dynamic scenes where objects (and light sources) are modified.

\subsubsection{Dynamic Objects and Fixed Lights} \label{sec:dynamic-objects-fixed-lights}

We first train our network on a set of four scenes where objects are randomly removed or translated in the scene, but keep all light sources fixed.

\boldparagraph{Results}
\figref{fig:dynamic-scenes} shows qualitative and quantitative results for our approach and the baselines.
We clearly see that our full model which uses both the Light Transport and the Image Synthesis Layers outperforms the other real-time approaches (lower section of the table), both qualitatively and quantitatively in terms of MSE, MSSIM, FID and Feature-L1 distance.
While the non-real-time denoising approach ``Denoising (1/1)'' achieves the best results, the real-time denoising approach that uses much fewer samples performs the worst.
We further analyze this behavior by plotting the MSSIM as a function of rendering time in \figref{fig:denoising-graph}.
While denoising approaches are able to achieve compelling results, the proposed neural rendering approach provides a better accuracy/runtime trade-off while being fully differentiable.

As evident from \figref{fig:dynamic-scenes}, our simple feature projection baseline performs only slightly weaker than our variant without photon mapping.
We attribute this to the fact that most of the light field in the scene can be encoded in local features and only dynamic parts like sharp shadows have to be learned.
This highlights the capability of neural rendering approaches to learn useful heuristics from the training data.
However, we also observe that our full architecture with photon mapping (which reasons more explicitly about light transport) achieves by far the best quantitative results.

\def\imagerowdiff#1#2{
	\includegraphics[width=.118\linewidth]{#1/projection/#2} &
	\includegraphics[width=.118\linewidth]{#1/projection_diff/#2} &
	\includegraphics[width=.118\linewidth]{#1/pointnet/#2} &
	\includegraphics[width=.118\linewidth]{#1/pointnet_diff/#2} &
	\includegraphics[width=.118\linewidth]{#1/photon/#2} &
	\includegraphics[width=.118\linewidth]{#1/photon_diff/#2} &
	\includegraphics[width=.118\linewidth]{#1/gt/#2} \\
}

\begin{figure*}[t]
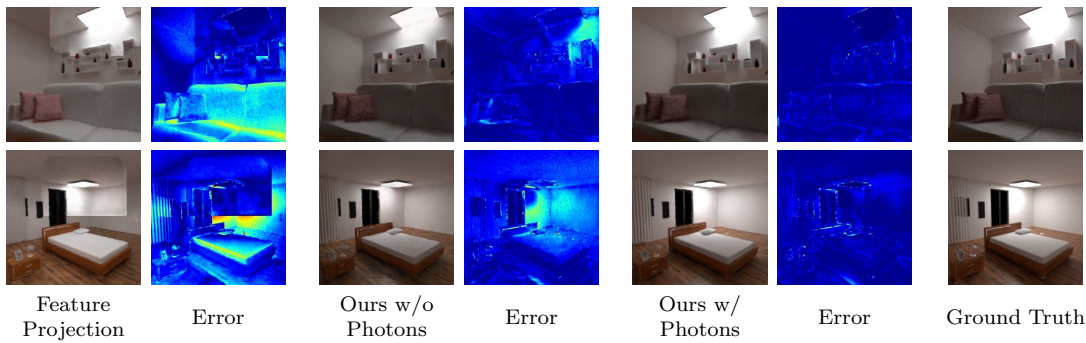

	\centering
	\small
	\setlength\tabcolsep{2pt} %
	\begin{tabular}{cc>{\hspace{9pt}}cc>{\hspace{9pt}}cc>{\hspace{9pt}}c}
		\imagerowdiff{compressed/d2_256/it00300000}{00005.jpg}
		\imagerowdiff{compressed/d2_256/it00300000}{00111.jpg}
		\footnotesize \makecell{Feature \\ Projection} & 
		\footnotesize Error &
		\footnotesize \makecell{Ours w/o\\ Photons} & 
		\footnotesize Error &
		\footnotesize \makecell{Ours w/\\ Photons} &
		\footnotesize Error &
		\footnotesize Ground Truth
	\end{tabular}
	\caption{\textbf{Dynamic Objects and Dynamic Lights.}
		We show the output of the feature projection baseline and our network's predictions with and without photons alongside the corresponding error maps for moving light sources. See supplementary for more results.
	}
	\label{fig:photon-importance}
\end{figure*}

\subsubsection{Dynamic Objects and Dynamic Lights} \label{sec:dynamic-objects-dynamic-lights}

In the previous experiment, both the feature projection and our approach without photons were able to handle shadows and other illumination effects well.
The reason for this is that the light sources were assumed static, making it possible to encode viewpoint-dependent light properties into the point features.
However, by design the feature projection baseline is unable to acquire an understanding of illumination effects in the presence of movable light sources that are not present in the current view.  
To see this effect, we augment the dataset from the previous experiment by turning all static light sources off and replacing them with a rectangular area light at the ceiling, which we move randomly.

\boldparagraph{Results}
Results from our method with photons, our approach without photons and the feature projection baseline are shown in \figref{fig:photon-importance}.
We observe that the feature projection baseline produces considerable artifacts while our approach with photons leads to much sharper shadows and more consistent global illumination.
This is also evident from the error maps in~\figref{fig:photon-importance}.
We provide a full quantitative evaluation in the supplementary material.

\section{Conclusion}

In this work, we have systematically investigated the importance of 3D \vs 2D reasoning for learning based photorealistic rendering.
Our experiments demonstrate that neural rendering benefits from joint 3D-2D reasoning, also confirming our hypothesis that reasoning in 3D is helpful in the presence of moving objects and light sources.
In contrast to denoising methods which rely on outputs from a sampling-based renderer, the presented approach is fully differentiable and can be used for training deep neural networks end-to-end.

\clearpage

\appendix

\section*{Supplementary Material}

This supplementary document provides additional information on our approach and more experimental results.
First, we provide detailed information on the Light Transport and Image Synthesis Layers in \secref{sec:architectures}.
We then describe the data generation pipeline in more detail in \secref{sec:data}.
Afterwards, we provide more information on the training procedure in \secref{sec:training}, including a proof showing
   	that we can train our models using noisy, unbiased renderings as supervision signal.
Finally, we provide additional qualitative and quantitative results in \secref{sec:results}.

\section{Architectures} \label{sec:architectures}

\subsection{Light Transport Layer}
The core of the Light Transport Layer is a PointNet-based architecture \cite{Qi2017CVPR} with fully-connected ResNet blocks \cite{He2016CVPR}, which is illustrated in \figref{fig:pointnet}.
While the PointNet architecture can have arbitrary depth (number of ResNet blocks), we use a depth of two for all the experiments in the paper.

Since we train our model on dynamic scenes with a variable number number of visible objects, the input point clouds have different sizes for different training samples.
In theory this is not a big problem, as PointNets can handle arbitrary point cloud sizes.
However, since we are using mini batches for training, having the same number of points for each training sample is desirable.
Therefore, our model always operates on the maximum point cloud size, and invisible objects are masked in the architecture using per-point visibility flags.

\begin{figure}[p]
\includegraphics[width=\textwidth]{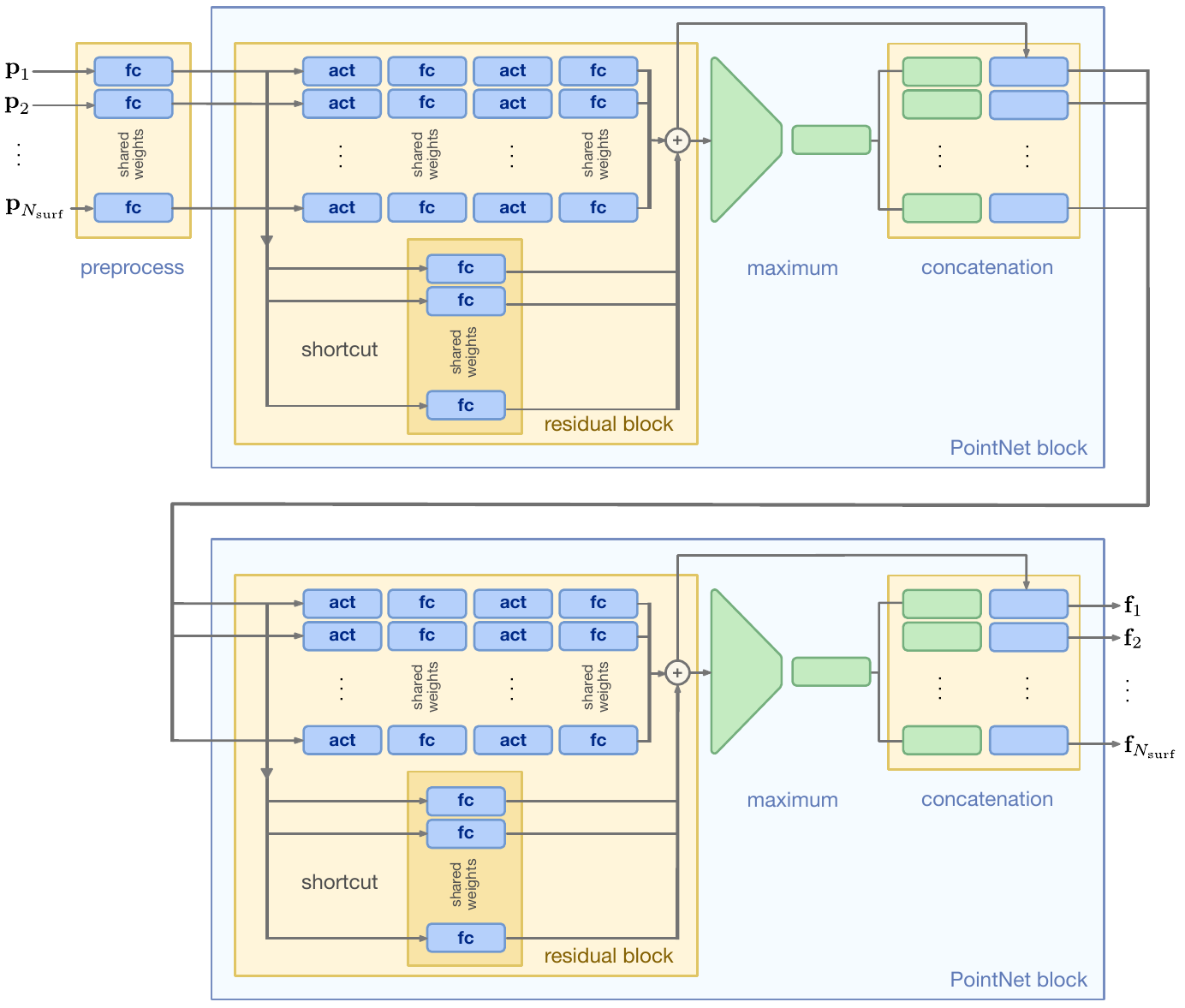}
\caption{
	\textbf{Light Transport Layer.}
	In the first stage of the Light Transport Layer all points and supplementary point information $\bp_i$ are processed in a preprocessing layer, whose purpose is to align its output feature dimension with the input feature dimension of the PointNet block.
	The point features are then processed in two consecutive PointNet blocks.
	A \mbox{PointNet} block comprises a residual block, where local features are computed for each point.
	The fully connected layers (hidden, output and shortcut layers) consist of 32 output neurons each, where the weights within one layer are shared between the input points.
	The input dimension to the first fully connected layer in a residual block is aligned with the output dimension of a PointNet block (64).
	Therefore, a fully connected shortcut layer is required for matching the feature dimensions at the end of a residual block.
	Following the residual block within a PointNet block, point features are concatenated with a global feature, which is computed as the maximum feature vector of all local features.
	The output features of the second PointNet block are denoted by $\mathbf{f}_i$.
	We denote fully connected layers by \textbf{fc} and ReLU activation functions by \textbf{act}.
	\label{fig:pointnet}
}
\end{figure}

\subsection{3D-to-2D Projection Step}
In the 3D-to-2D projection step, the 3D point features are projected to image space, where the point locations are discretized.
Points that are occluded by the scene's geometry are masked out, which is determined by performing an occlusion check using a rendered depth map.
To make sure that we do not accidentally remove points on the scene's surface, we use a tolerance of $\varepsilon=10^{-3}$ in the occlusion check.
If multiple features are projected to the same pixel, we compute the mean feature vector for all points projecting to that pixel.
If a pixel has no points projecting to it, its feature vector is defined as zero.

\subsection{Image Synthesis Layer}

The input to the Image Synthesis Layer are the projected features from the projection step and additional information in image space, which can be computed cheaply using OpenGL shaders
These image space buffers contain information about the geometry and material information observed from the current view.
They include depth map, albedo (diffuse reflectance), normal map in world coordinates as well as a view ray map, which contains for each pixel the ray direction in world coordinates going from the camera center through the respective pixel center.
The intention behind using these image space layers is to leverage the image formation process in multiple ways.
The normal and view direction information can be used by the network to infer shading in image space.
The albedo layer supports texture synthesis where point projections are sparse.
In addition, by providing this information in image space, the light transport layer can solely focus on the task of modeling the illumination in the scene.
However, the image space layers do not contain useful information for reasoning about light transport in the scene.
A detailed visualization and description of the Image Synthesis Layer is provided in \figref{fig:unet}.

\begin{figure}[t!]
\includegraphics[width=\textwidth]{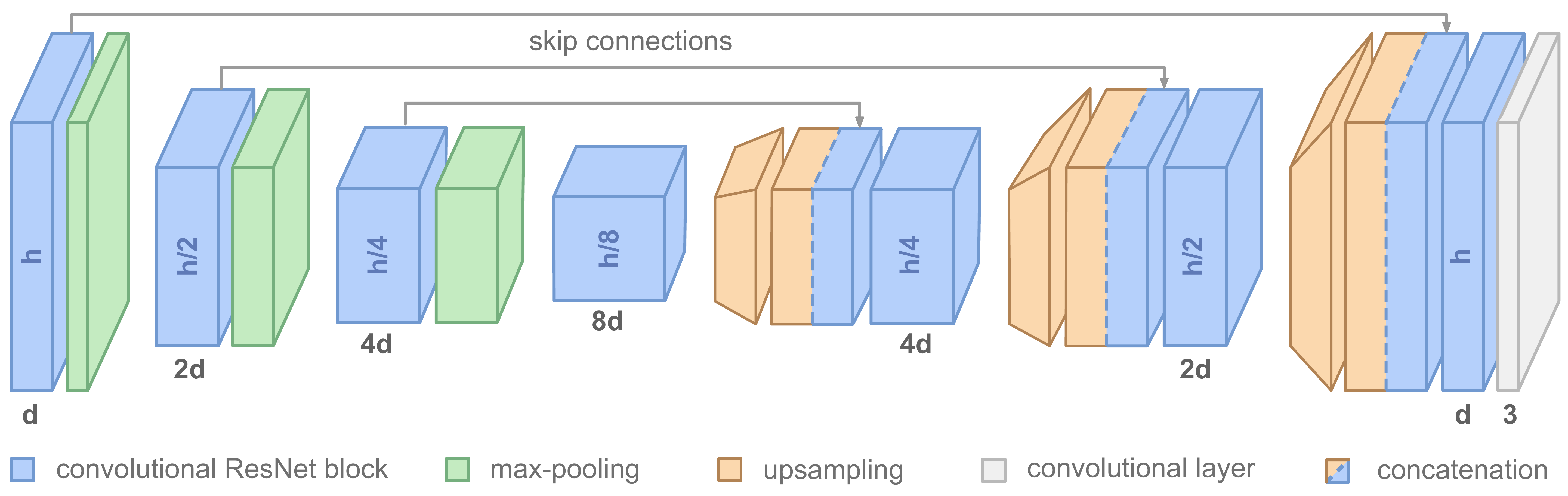}
\caption{
	\textbf{Image Synthesis Layer.}
	For final image synthesis we use a UNet \cite{Ronneberger2015MICCAI} architecture where the resolution is reduced in three steps and expanded again.
	To this end, each level comprises a convolutional ResNet \cite{He2016CVPR} block consisting of two $3\times3$ convolutional layers and ReLu activations.
	We use the same feature dimension for the input, hidden and output layers in a convolutional ResNet block.
	The convolutional ResNet blocks are followed by a downsampling step, which is implemented using max-pooling layers.
	The feature dimension of the convolutional layers depend on the level, starting at a dimension of $d=64$, which is then doubled after each downsampling step.
	The features of the lowest level are then upsampled again using bilinear interpolation, concatenated with the convolutional ResNet block output from the respective downsampling layer through a skip connection and processed in another convolutional ResNet block.
	After the last upsampling layer an additional convolutional layer is used to render an image with three channels.
	The numbers below the layers correspond to the number of feature maps in each layer.
	The numbers inside the layers correspond to the layer's resolution, starting at a square resolution of $h\times h$.
	We use $h=128$ in our static scene ablation study and $h=256$ for the other experiments.
	\label{fig:unet}
}
\end{figure}

\section{Datasets} \label{sec:data}

\subsection{Data Generation and Sampling}

The datasets used in our experiments comprise a single scene for each static scene dataset, and four scenes for dynamic experiments \cite{Bittlerli2016}.
Since the data generation procedure for static scenes is a simplification of the dynamic case, we only describe the dynamic case in this section.
Since we use learnable feature descriptors in our model, we must ensure that there are point correspondences between different training samples of the same scene.
To this end, we sample an initial, static point cloud for each scene.
This point cloud is then modified according to the scene modifications in the training sample.
If an object is removed from the scene, the points are removed from the initial point cloud.
If an object is translated, the points sampled from its surface are translated accordingly.
For each scene in the dataset, we first sample a static point cloud, which is then modified for each sample in the dataset.
A positive side effect of this is that we only have to store scene modification information for each sample, saving memory.

\subsection{View Sampling}
For each scene, we would like to cover the space of possible viewing locations and directions as accuractely as possible.
At the same time we want to have a high number of views where a lot of scene details are visible to have an effective supervision signal for training.
We observe that most of the objects in a scene are arranged along the walls or the floor.
Therefore, we sample a viewing location uniformly from a bounding box that is slightly smaller than the scene's bounding box.
Note that this means that a few of the sampled locations might lie inside an object.
However, we found that these ``outliers'' do not pose a problem to our method in practice as long as we observe a sufficiently large number of views outside of objects during training.
Next, we sample a viewing direction by sampling a look-at location uniformly from a bounding box that is half the size of the location bounding box.
As a result, the distance between the camera and scene objects is far enough to render views with rich image content.

\subsection{Point Cloud Sampling}
We define a scene by a set of shapes $\mathcal{S}$, where each shape $S_i \in \mathcal{S}$ is itself a set of triangles.
Each shape is assigned a sampling importance $w(S_i)$ corresponding    to its surface area, which is the sum of triangle areas for that shape.
Given the sampling importances and a point cloud of size $N$, we first sample $N$ shapes according to a distribution where the probability of sampling a shape is proportional to its sampling importance.
This can be achieved by using discrete inverse transform sampling, where a discrete cumulative distribution is calculated for the sequence of shapes $(S_1, \hdots, S_n)$:
\begin{equation}
	\text{cdf}(i) = \frac{\sum_{j=1}^i w(S_j)}{\sum_{j=1}^n w(S_j)}
\end{equation}
Using a uniform sample $s \sim \mathcal{U}(0, 1)$, a shape index $i$ can be sampled according to
\begin{equation}
	i = \underset{k}{\text{arg max}} \left\{ k: \text{cdf}(k) < u \right\},
\end{equation}
which can be implemented efficiently using bisection.

For each shape sampled from the distribution, our goal is to obtain a point sampled uniformly from the shape's surface.
Since we work with a mesh scene representation, all the shapes are represented by a set of triangles.
Therefore, for each point we first sample the triangle with the same technique we used for shape sampling, using the triangle area as sampling importance.
Then, we sample a point location uniformly from the triangle.
This way, uniformly distributed samples from the shapes' surfaces can be obtained.

\subsection{Scene Modification Sampling}
To train our model on all possible scene configurations (each object or light source could be located anywhere or not be present in the scene at all), we must cover this distribution well in the dataset.
To this end, we manually define for each dynamic object an axis-aligned bounding box from which we sample a position for each training sample.
The bounding boxes can also be limited to one or two dimensions, e.g. if an object can only be translated along a wall.
Although we do not always get realistic object arrangements using this sampling strategy, this is not a limitation, as it makes our model more general (\ie our model is trained for both realistic and non-realistic object arrangements).
In addition to object translations, we randomly remove objects with a probability of 0.2 per object from the scene.

\begin{figure}[t!]
	\includegraphics[width=\textwidth]{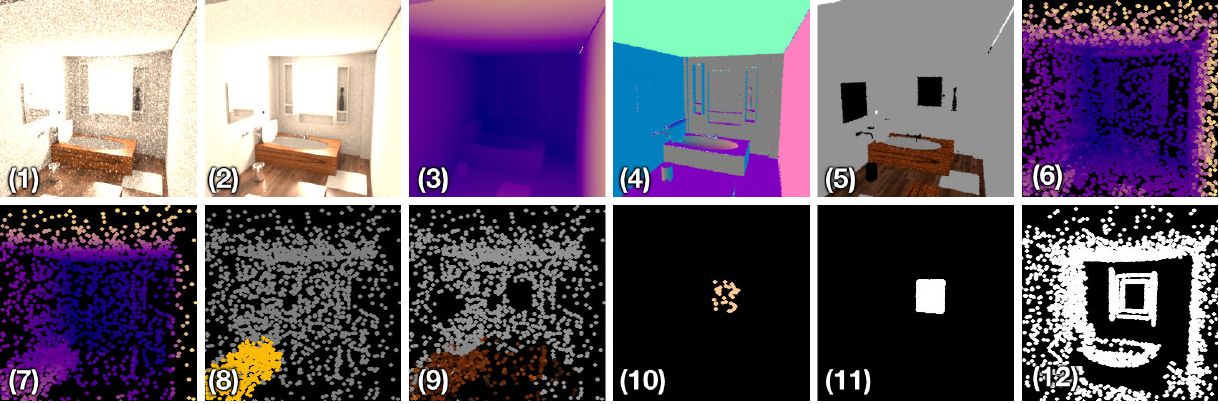}
	\caption{\textbf{Data Sample Components.}
		(1) noisy supervision, (2) ground truth rendering, (3) depth map, (4) normal map, (5) albedo map, (6) point cloud, (7) point cloud with occlusion masking, (8) per-point visibility, (9) per-point albedo, (10) per-point emitter spectrum, (11) photon origins, (12) photon intersections with scene.
		\label{fig:data-samples}
	}
\end{figure}

\subsection{Data Sample Components} \label{sec:sample-components}

In \figref{fig:data-samples} we provide a visualization of the different components of a sample in our dataset.
A sample consists of a noisy supervision rendering, which we obtain from a bidirectional path tracer \cite{Jakob2010}, using one sample per pixel (image (1)).
For the evaluation of the test set, we render an additional ground truth rendering using 128 samples per pixel (image (2)).
The images (3)--(5) in \figref{fig:data-samples} are visualizations of the additional image-space layers as described in \secref{sec:method}.
Image (6) visualizes all points in the point cloud for that sample, and image (7) all points that are visible in the current view.
Image (8) visualizes the per-point visibility of objects in the scene with yellow denoting invisible objects.
The per-point visibility is needed as our architecture requires a fixed number of points as input, and is used for masking out invisible objects in the Light Transport Layer.
Images (9) and (10) visualize additional point properties which are fed into the Light Transport Layer, such as the diffuse reflectance for each point as well as an emitter spectrum, which is non-zero for points lying on a light source.
Images (11) and (12) show photon origins and their respective first intersection with the scene.

\section{Training} \label{sec:training}

\subsection{Hyperparameters}
We train all models using the Adam optimizer \cite{Kingma2015ICLR} with a learning rate of $\lambda=5 \times 10^{-4}$, which we decay by a factor of $0.99$ after each epoch.
These hyperparameters are the result of a hyperparameter optimization using grid search, where we tested different learning rates and decay rates for Adam and RMSprop for 100,000 iterations.
For the static scene experiment in \secref{sec:ablation-static} we use a batch size of 128.
For the dynamic scene experiments in \secref{sec:dynamic-scenes} we use a batch size of 32, as more GPU memory is required for the Light Transport Layer implementation.
All models were created and trained using PyTorch 1.0\footnote{\url{https://pytorch.org}}  \cite{Paszke2017}.

\subsection{Supervised Learning with Noisy Renderings}
\def\params{\theta}
\def\Set#1{\left\{#1\right\}}
\def\round#1{\left[#1\right]}
\def\proj{\phi}
\def\inlinefrac#1#2{#1/#2}
\def\abs#1{\left|#1\right|}
\def\conv{\odot}
\def\defeq{:=}
\def\vectorbf#1{\mathbf{#1}}
\def\pointbf#1{\mathbf{#1}}

\def\diff{\text{d}}
\def\br#1{\left(#1\right)}

\def\expectsymb{\mathbb{E}}
\def\expect#1{\expectsymb\left[#1\right]}
\def\expectalt#1#2{\expectsymb_{#1}#2}
\def\expectsub#1#2{\expectsymb_{#1}\left[#2\right]}
\def\expectsubsq#1#2{\expectsymb_{#1}^2\left[#2\right]}
\def\var{\text{Var}}
\def\cov#1{\text{cov}\br{#1}}
\def\trace{\text{Tr}}

\def\norm#1{\left\lVert#1\right\rVert}
\def\scalarprod#1{\left\langle#1\right\rangle}

\def\Set#1{\left\{#1\right\}}
\def\model{\varphi}
\def\modelx{\model_\theta(\bX)}

\def\norm#1{\left\lVert#1\right\rVert}
\def\expectsymb{\mathbb{E}}
\def\expectsub#1#2{\expectsymb_{#1}\left[#2\right]}
\def\expi{\expectsub{\imgnoisy|\bI}}
\def\round#1{\left[#1\right]}
\def\abs#1{\left|#1\right|}
\def\imgnoisy{\mathbf{\hat I}}

Since rendering a large set of photorealistic renderings for training would require a lot of time, we use noisy renderings from a physically based renderer as supervision. 
More specifially, we use the bidirectional path tracing implementation in Mitsuba \cite{Jakob2010}.
Similar techniques have recently been used to learn image denoising \cite{Lehtinen2018ICML,Krull2019CVPR}.
Our key insight is that we can exploit the unbiasedness of state-of-the-art rendering algorithms like bidirectional path tracing \cite{Lafortune1993} to obtain unbiased gradient estimates.

To this end, we describe the input to our network by random variable $\bX$, which comprises
a point cloud $\bP$, a view represented by a world-to-view transform $\bT$ and additional image-space information $\bA$
as described in \secref{sec:method}.
As supervision signal, we render a noisy image $\imgnoisy$ that is an unbiased estimate of the ground truth rendering $\bI(\bX)$.
When we train our network using the \ac{mse} and stochastic gradient descent, our gradients will be unbiased when using these noisy supervision renderings from such an unbiased rendering algorithm.
Formally, this can be expressed as follows:
\begin{lemma}
	Let $\bX$ be an input representation of a scene, $\varphi_\theta$ our rendering network and $\imgnoisy$ a noisy rendering of $\bX$ following a distribution $p(\imgnoisy|\bX)$ which depends on the chosen sampling-based rendering algorithm.
	Assume that the true (noise-free) rendering is given by $\bI(\bX)$. Further assume that the rendering algorithm is unbiased, \ie, $\expectsymb_{\imgnoisy | \bX}[\imgnoisy] = \bI(\bX)$.
	In this case, the following equality holds, \ie, the gradient estimates are unbiased:
	\vspace{-0.3em}
	\begin{equation} \label{eq:lemma-unbiased-gradient}
		\expectsub{\imgnoisy | \bX}{ \nabla_{\theta} \|\model_\theta(\bX) - \imgnoisy\|^2 }
		=
		\nabla_{\theta} \|\model_\theta(\bX) - \bI(\bX)\|^2 
	\end{equation}
	\vspace{-1em}
\end{lemma}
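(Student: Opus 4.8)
The plan is to compute the $\theta$-gradient of the per-sample squared error \emph{pointwise} in $\imgnoisy$ first, observe that it is an affine function of $\imgnoisy$, and only then take the conditional expectation, so that merely the first moment of the renderer matters. First I would expand
\begin{equation*}
\norm{\modelx - \imgnoisy}^2 = \norm{\modelx}^2 - 2\scalarprod{\modelx, \imgnoisy} + \norm{\imgnoisy}^2 .
\end{equation*}
Differentiating with respect to $\theta$, the term $\norm{\imgnoisy}^2$ is $\theta$-independent and drops out, while writing $J_\theta \defeq \nabla_\theta \modelx$ for the Jacobian of the network output with respect to $\theta$, I obtain the pointwise identity $\nabla_\theta \norm{\modelx - \imgnoisy}^2 = 2\, J_\theta^{\top}\br{\modelx - \imgnoisy}$, which is affine in $\imgnoisy$.

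Next I would apply $\expectsub{\imgnoisy|\bX}{\cdot}$. Since the network output $\modelx$ and its Jacobian $J_\theta$ depend only on $\bX$ and $\theta$ and never on the noise in $\imgnoisy$, they factor out of the expectation, and by linearity $\expectsub{\imgnoisy|\bX}{2\, J_\theta^{\top}\br{\modelx - \imgnoisy}} = 2\, J_\theta^{\top}\br{\modelx - \expectsub{\imgnoisy|\bX}{\imgnoisy}}$. Invoking the unbiasedness hypothesis $\expectsub{\imgnoisy|\bX}{\imgnoisy} = \bI(\bX)$ replaces the mean of the noisy target by the clean rendering, yielding $2\, J_\theta^{\top}\br{\modelx - \bI(\bX)}$, which is exactly $\nabla_\theta \norm{\modelx - \bI(\bX)}^2$. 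This establishes \eqnref{eq:lemma-unbiased-gradient}.

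Rather than a genuine obstacle, the point I would emphasize is \emph{why} the argument is so clean: because differentiation is carried out before integration, the gradient is affine in $\imgnoisy$, so the noise enters only linearly and is annihilated by averaging. Consequently no interchange of $\nabla_\theta$ with the expectation is needed, and — crucially — no control over the \emph{variance} of the renderer is required, only its first moment. The sole regularity condition to verify is that the expressions are well defined: $\model_\theta$ must be differentiable in $\theta$, and $\imgnoisy$ must have a finite conditional first moment, which is guaranteed since $\expectsub{\imgnoisy|\bX}{\imgnoisy}=\bI(\bX)$ is assumed to exist. As a sanity check one may also note the bias--variance decomposition
\begin{equation*}
\expectsub{\imgnoisy|\bX}{\norm{\modelx - \imgnoisy}^2} = \norm{\modelx - \bI(\bX)}^2 + \expectsub{\imgnoisy|\bX}{\norm{\imgnoisy - \bI(\bX)}^2},
\end{equation*}
whose last term is independent of $\theta$; differentiating this identity recovers the same conclusion from a complementary angle.
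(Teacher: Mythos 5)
You prove the identity correctly, but by a genuinely different route from the paper. The paper's own proof first pulls $\nabla_\theta$ outside the conditional expectation, then expands $\expectsub{\imgnoisy|\bX}{\norm{\model_\theta(\bX)-\imgnoisy}^2}$ via the binomial theorem, applies unbiasedness to the cross term, and finally swaps the $\theta$-constant term $\expectsub{\imgnoisy|\bX}{\norm{\imgnoisy}^2}$ for $\norm{\bI(\bX)}^2$ before recombining the square. You instead differentiate first, observe that $\nabla_\theta\norm{\model_\theta(\bX)-\imgnoisy}^2 = 2\,J_\theta^{\top}\!\left(\model_\theta(\bX)-\imgnoisy\right)$ is affine in $\imgnoisy$, and only then average, so that unbiasedness finishes the argument by linearity of expectation. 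Your ordering buys two small but real advantages: it needs no interchange of differentiation and expectation (which the paper justifies only informally, by saying the expectation does not depend on $\theta$), and it requires only the first conditional moment of $\imgnoisy$ to exist, whereas the paper's intermediate expression $\expectsub{\imgnoisy|\bX}{\norm{\imgnoisy}^2}$ implicitly assumes a finite second moment. What the paper's version buys in exchange is an explicit view of the bias--variance structure of the expected loss, which is precisely the decomposition you record as a sanity check at the end; in that sense the two arguments are complementary, and both establish \eqnref{eq:lemma-unbiased-gradient}.
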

\begin{proof}
Since the expectation does not depend on the parameters $\theta$, the gradient can be pulled out of the expectation. The left side of \eqnref{eq:lemma-unbiased-gradient} becomes
\begin{align}\label{eq:proof1}
\expectsub{\imgnoisy | \bX}{
	 \nabla_{\theta}  \|\model_\theta(\bX) - \imgnoisy\|^2 }
=   \nabla_{\theta}
\expectsub{\imgnoisy | \bX}{ 
	\|\model_\theta(\bX) - \imgnoisy\|^2 }
\end{align}
By applying the binomial theorem and the property of the estimator $\imgnoisy$ being unbiased, which means that $\expectsub{\imgnoisy | \bX}{\imgnoisy}=\bI(\bX)$,
the expectation term can be further expanded to
\begin{align}
	\expectsub{\imgnoisy | \bX}{  \|\model_\theta(\bX) - \imgnoisy\|^2 }
	& =
	\expectsub{\imgnoisy | \bX}{
		\|\model_\theta(\bX)\|^2  - 2 \langle \model_\theta(\bX),
		\imgnoisy\rangle + \|\imgnoisy\|^2 }
	\\ & = 
		\expectsub{\imgnoisy | \bX}{\|\model_\theta(\bX)\|^2}  
		- 2\, \expectsub{\imgnoisy | \bX}{\langle \model_\theta(\bX), \imgnoisy\rangle}
		 + \expectsub{\imgnoisy | \bX}{\|\imgnoisy\|^2 }
	\\ & =  \|\model_\theta(\bX)\|^2
		- 2\, \langle \model_\theta(\bX), \bI(\bX)\rangle
		+ \expectsub{\imgnoisy | \bX}{\|\imgnoisy\|^2 }
		  \label{eq:proof7}
\end{align}
Taking the gradient with respect to $\theta$ in \eqnref{eq:proof7} allows for removing or adding terms that are constant with respect to $\theta$. Thus, we can replace 
$\expectsub{\imgnoisy | \bX}{\|\imgnoisy\|^2 }$ 
with $\|\bI(\bX)\|^2 $:
\begin{align}
	\nabla_{\theta} \expectsub{\imgnoisy | \bX}{  \|\model_\theta(\bX) - \imgnoisy\|^2 }
	& =
	\nabla_{\theta} \left[\|\model_\theta(\bX)\|^2
		- 2\, \langle \model_\theta(\bX), \bI(\bX)\rangle
		+ \expectsub{\imgnoisy | \bX}{\|\imgnoisy\|^2 }\right]
	\\ & =  \nabla_{\theta} \left[\|\model_\theta(\bX)\|^2
	- 2\, \langle \model_\theta(\bX), \bI(\bX)\rangle
	+ \|\bI(\bX)\|^2 \right]
	\\ & =  \nabla_{\theta} \| \model_\theta(\bX) - \bI(\bX) \|^2
\end{align}
Inserting this into \eqnref{eq:proof1} results in \eqnref{eq:lemma-unbiased-gradient}, concluding the proof.
\end{proof}

\section{Additional Results} \label{sec:results}

\def\veachimagegroup#1{
	\includegraphics[width=.12\textwidth]{compressed/static/teacher/#1.jpg} &
	\includegraphics[width=.12\textwidth]{compressed/static/01/#1.jpg} &
	\includegraphics[width=.12\textwidth]{compressed/static/02/#1.jpg} &
	\includegraphics[width=.12\textwidth]{compressed/static/03/#1.jpg} &
	\includegraphics[width=.12\textwidth]{compressed/static/04/#1.jpg} &
	\includegraphics[width=.12\textwidth]{compressed/static/05/#1.jpg} &
	\includegraphics[width=.12\textwidth]{compressed/static/06/#1.jpg} &
	\includegraphics[width=.12\textwidth]{compressed/static/GT/#1.jpg}
}

\begin{figure}[t]
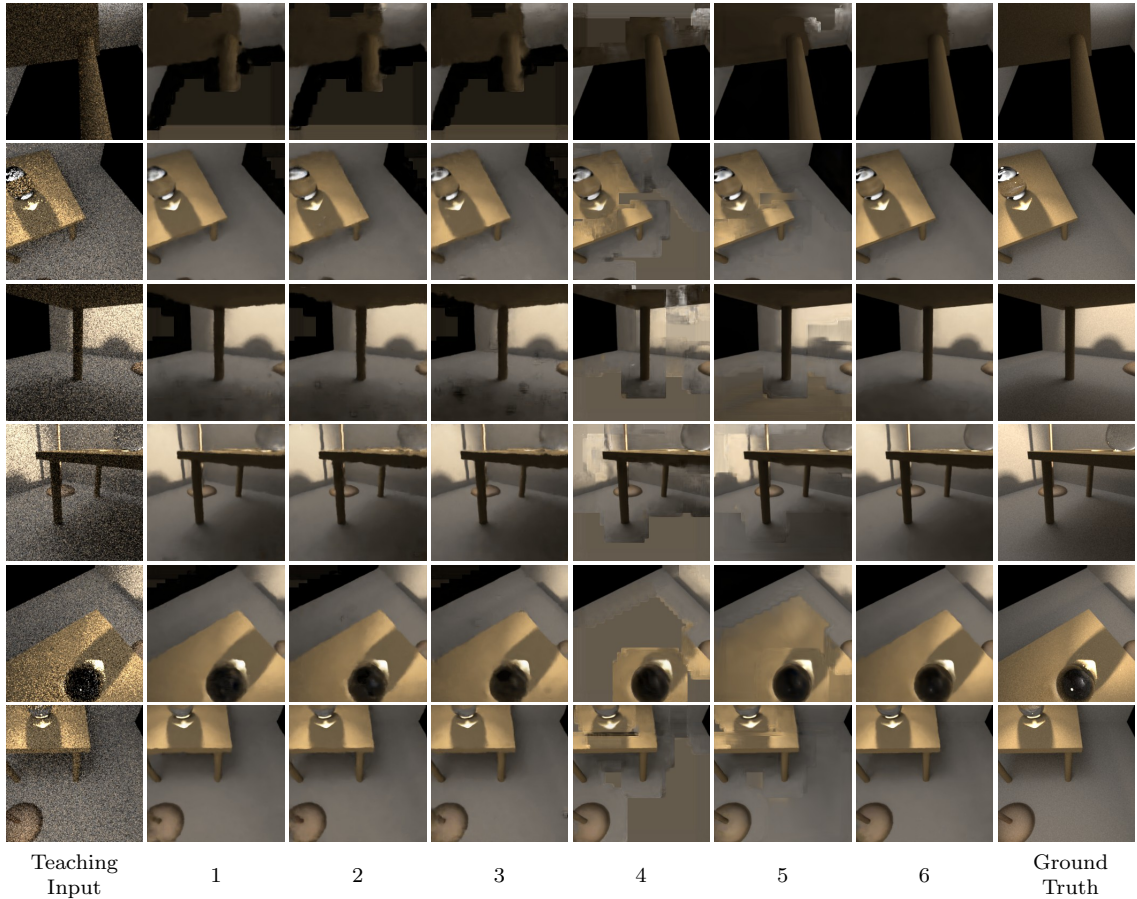

	\centering
	\setlength\tabcolsep{1pt} %
	\begin{tabular}{cccccccc}
		\veachimagegroup{01} \\[-0.2em]
		\veachimagegroup{03} \\[-0.2em]
		\veachimagegroup{09} \\[-0.2em]
		\veachimagegroup{10} \\[-0.2em]
		\veachimagegroup{11} \\[-0.2em]
		\veachimagegroup{14} \\
		\footnotesize \makecell{Teaching\\ Input} &
		\footnotesize 1 & \footnotesize 2 & \footnotesize 3 & \footnotesize 4 & \footnotesize 5 & \footnotesize 6 &
		\footnotesize \makecell{Ground\\ Truth}
	\end{tabular}
	\caption{\textbf{Ablation Study on Static Scene.}
		Additional visual results for our static scene ablation study, extending the results in \figref{fig:ablation-static} in \secref{sec:ablation-static}. 
		\label{fig:ablation-static-supp}
	}
\end{figure}

\def\staticimagegroup#1{
	\includegraphics[width=.155\linewidth]{compressed/04_static_bathroom/staticbath_#1_00.jpg} &
	\includegraphics[width=.155\linewidth]{compressed/04_static_bathroom/staticbath_#1_01.jpg} &
	\includegraphics[width=.155\linewidth]{compressed/04_static_bathroom/staticbath_#1_02.jpg}
}
\def\staticimagegroupkitchen#1{
	\includegraphics[width=.155\linewidth]{compressed/04_static_kitchen/statickitchen_#1_00.jpg} &
	\includegraphics[width=.155\linewidth]{compressed/04_static_kitchen/statickitchen_#1_01.jpg} &
	\includegraphics[width=.155\linewidth]{compressed/04_static_kitchen/statickitchen_#1_02.jpg}
}

\begin{figure}[p]
	\centering
	\setlength\tabcolsep{1pt} %
	\renewcommand{\arraystretch}{.7}
	\begin{tabular}{ccc>{\hspace{7.5pt}}ccc}
		\staticimagegroup{09} & 
		\staticimagegroup{15} \\
		\staticimagegroup{01} &
		\staticimagegroup{02} \\
		\staticimagegroup{03} &
		\staticimagegroup{04} \\
		\staticimagegroup{06} &
		\staticimagegroup{07} \\
		\footnotesize \makecell{Teaching Input} &
		\footnotesize \makecell{Prediction} &
		\footnotesize \makecell{Ground Truth} &
		\footnotesize \makecell{Teaching Input} &
		\footnotesize \makecell{Prediction} &
		\footnotesize \makecell{Ground Truth} \\
	\end{tabular}
	\caption{
		\textbf{Bathroom Scene.}
		Results of our model on a realistic static bathroom scene.
		\label{fig:results-static-bathroom}
	}
	\vspace{0.8em}
	\begin{tabular}{ccc>{\hspace{7.5pt}}ccc}
		\staticimagegroupkitchen{00} &
		\staticimagegroupkitchen{01} \\
		\staticimagegroupkitchen{02} &
		\staticimagegroupkitchen{03} \\
		\staticimagegroupkitchen{07} &
		\staticimagegroupkitchen{10} \\
		\footnotesize \makecell{Teaching Input} &
		\footnotesize \makecell{Prediction} &
		\footnotesize \makecell{Ground Truth} &
		\footnotesize \makecell{Teaching Input} &
		\footnotesize \makecell{Prediction} &
		\footnotesize \makecell{Ground Truth} \\
	\end{tabular}
	\caption{
		\textbf{Kitchen Scene.}
		Results of our model on a realistic static kitchen scene.
		\label{fig:results-static-kitchen}
	}
\end{figure}

For the static scenes ablation study in \secref{sec:ablation-static} we tested different input configurations for our network, showing that we achieve the best possible outcome by combining all of the inputs.
Additional visual results for this experiment are shown in \figref{fig:ablation-static-supp}.

We also tested our model on two additional challenging static scenes, with results shown in \figref{fig:results-static-bathroom} and \figref{fig:results-static-kitchen}, respectively.
For this experiment, we used a realistic bathroom scene and a realistic kitchen scene~\cite{Bittlerli2016} at an image resolution of $256\times 256$ pixels.
Both scenes were trained with a batch size of 128 for 150,000 iterations.
Although there is no light transport to be learned in these static scene experiments, we find that our model is able to encode realistic static scenes well, and renders novel views accurately.

For dynamic scenes we also conducted two experiments:
one where we compared our approach to a set of baselines in \secref{sec:dynamic-objects-fixed-lights}, using a dataset with dynamic objects and fixed lights, where we translated and removed objects randomly.
And another experiment where we highlight the importance of the Light Transport Layer and the additional photon architecture in \secref{sec:dynamic-objects-dynamic-lights}, on a dataset with dynamic objects and dynamic lights, where we additionally translate rectangular light sources randomly along the ceiling.
\tabref{tab:baselines1} shows the full quantitative evaluation of the experiments for dynamic objects and fixed lights. 
For the experiment with dynamic objects and dynamic lights we provide a full quantitative evaluation in~\tabref{tab:baselines2}.

\figref{fig:results-baselines} shows additional visual results for the baseline comparison for dynamic objects and fixed lights, complementing \figref{fig:dynamic-scenes}.
In \figref{fig:error-comparison} we show examples where our method does not predict the illumination accurately.
These error images also show that for the denoising approaches errors occur mostly in image regions with high frequency components, \ie edges and textures.
For our approach, errors sometimes also occur in larger image regions when the prediction is inaccurate or sparse.
This also explains that while our approach performs best for most of the metrics in~\tabref{tab:baselines1} and~\tabref{tab:baselines2}, the MSE is lower for the denoising approaches.

In addition to the results shown in \figref{fig:photon-importance}, we show visual results and error images for dynamic objects and dynamic lights in \figref{fig:results-photon-diff}, as well as failure cases in \figref{fig:failures}.

In addition to the quantitative comparison for dynamic objects and fixed lights (\figref{fig:denoising-graph}), we show a more comprehensive quantitative comparison for dynamic objects and dynamic lights in~\figref{fig:denoisingd2}.
In addition to MSSIM and FID, we compare L1 feature losses from different stages of the Inception v3 network \cite{Szegedy2015CVPR, Szegedy2016CVPR}, showing that our approach clearly outperforms the denoising baselines on different levels of image abstraction.

\clearpage

\begin{table}[t!]
	\centering
	\setlength{\tabcolsep}{17pt}
	\begin{footnotesize}
		\begin{tabular*}{\linewidth}{lrrrrr}
			\toprule
			Architecture &   time / frame &    MSE &     MSSIM &         FID &  Feature L1 \\
			\midrule
			Denoising (1/1) &  1.5059s &  0.0005 &  0.880 &   26.4 &                 0.163 \\
			Denoising (1/4) &  0.3800s &  0.0007 &  0.867 &   28.1 &                 0.172 \\
			Denoising (1/16) &  0.0986s &  0.0012 &  0.835 &   38.7 &                 0.203 \\
			Denoising (1/32) &  0.0532s &  0.0018 &  0.813 &   54.1 &                 0.233 \\
			Denoising (1/64) &  0.0283s &  0.0029 &  0.781 &   94.0 &                 0.281 \\
			\midrule
			CNN only &  0.0191s &  0.0043 &  0.835 &   36.1 &                 0.195 \\
			Feature projection &  0.0210s &  0.0037 &  0.841 &   32.5 &                 0.185 \\
			Ours (w/o photons) &  0.0243s &  0.0044 &  0.841 &   31.4 &                 0.184 \\
			Ours (w/ photons) &  0.0459s &  0.0028 &  0.849 & 30.6 &  0.182 \\
			\bottomrule
		\end{tabular*}
	\end{footnotesize}
	\vspace{0.3cm}
	\caption{
		\textbf{Dynamic Objects and Fixed Lights.}
		Quantitative evaluation for our experiment on dynamic objects and fixed lights.
	}
	\label{tab:baselines1}
\end{table}

\begin{table}[t!]
	\centering
	\setlength{\tabcolsep}{17pt}
	\begin{footnotesize}
		\begin{tabular*}{\linewidth}{lrrrrr}
			\toprule
			Architecture &     time / frame &     MSE &  MSSIM &    FID & Feature L1 \\
			\midrule
			Denoising (1/1) &  1.5059s &  0.0002 &  0.930 &   17.1 &                 0.137 \\
			Denoising (1/4) &  0.3801s &  0.0002 &  0.923 &   17.6 &                 0.143 \\
			Denoising (1/16) &  0.0988s &  0.0005 &  0.896 &   23.5 &                 0.172 \\
			Denoising (1/32) &  0.0518s &  0.0008 &  0.874 &   38.6 &                 0.207 \\
			Denoising (1/64) &  0.0283s &  0.0016 &  0.839 &   84.1 &                 0.269 \\
			\midrule
			CNN only &  0.0190s &  0.0100 &  0.827 &   33.7 &                 0.199 \\
			Feature projection &  0.0208s &  0.0098 &  0.827 &   32.9 &                 0.197 \\
			Ours (w/o photons) &  0.0243s &  0.0029 &  0.871 &   30.0 &                 0.184 \\
			Ours (w/ photons) &  0.0468s &  0.0014 &  0.887 &   25.1 &  0.172 \\
			\bottomrule
		\end{tabular*}
	\end{footnotesize}
	\vspace{0.3cm}
	\caption{
		\textbf{Dynamic Objects and Dynamic Lights.}
		Quantitative evaluation for our experiment on dynamic objects and dynamic lights.
	}
	\label{tab:baselines2}
\end{table}

\begin{figure}[p]
	\centering
	\includegraphics[width=0.45\linewidth]{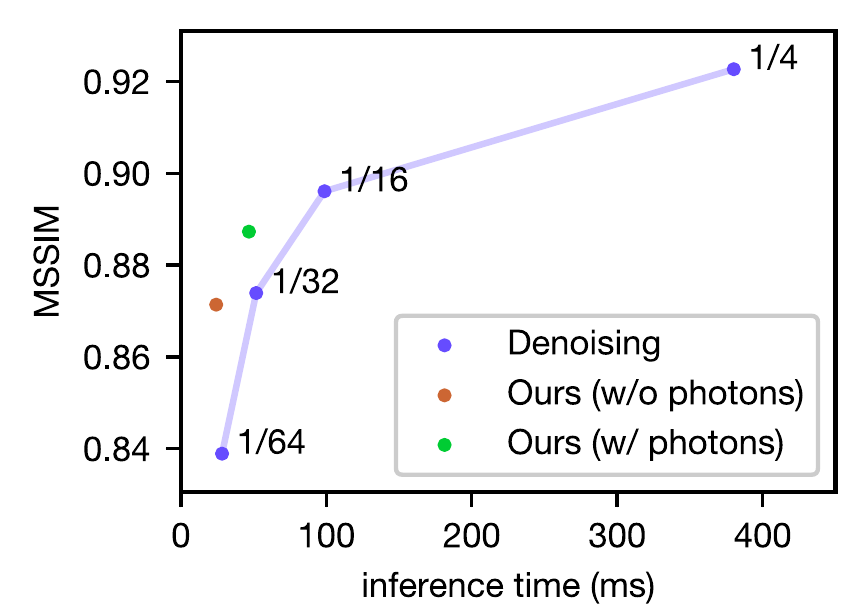}\hspace{0.5cm}%
	\includegraphics[width=0.45\linewidth]{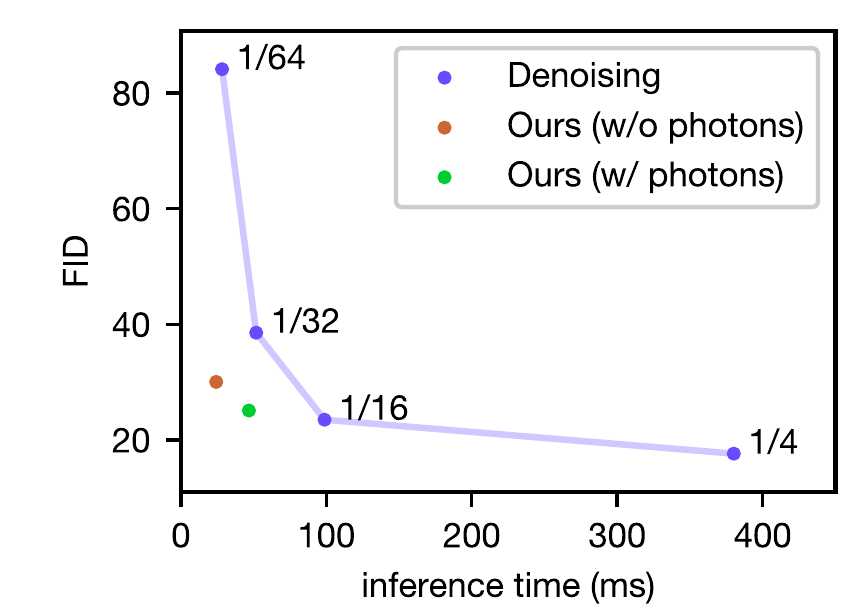}\\\vspace{0.5cm}%
	\includegraphics[width=0.45\linewidth]{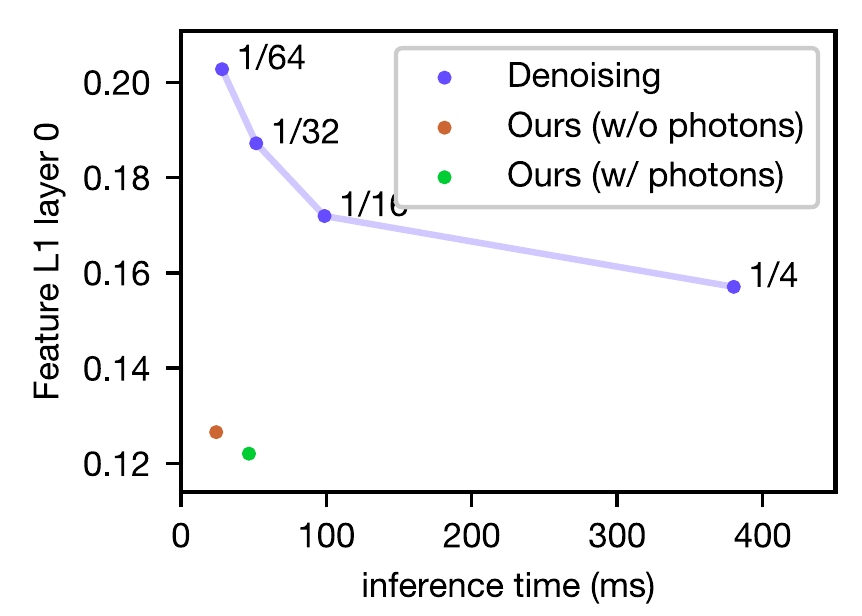}\hspace{0.5cm}%
	\includegraphics[width=0.45\linewidth]{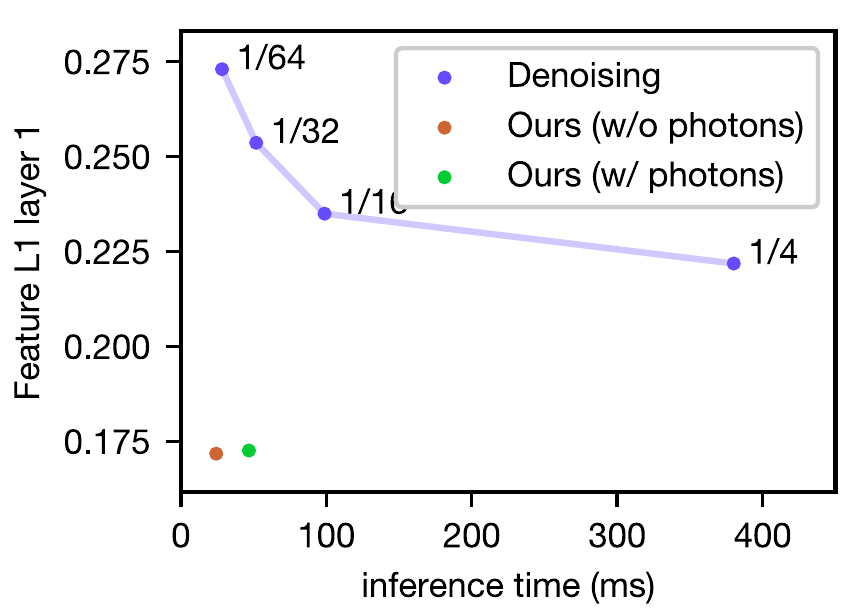}\\\vspace{0.5cm}%
	\includegraphics[width=0.45\linewidth]{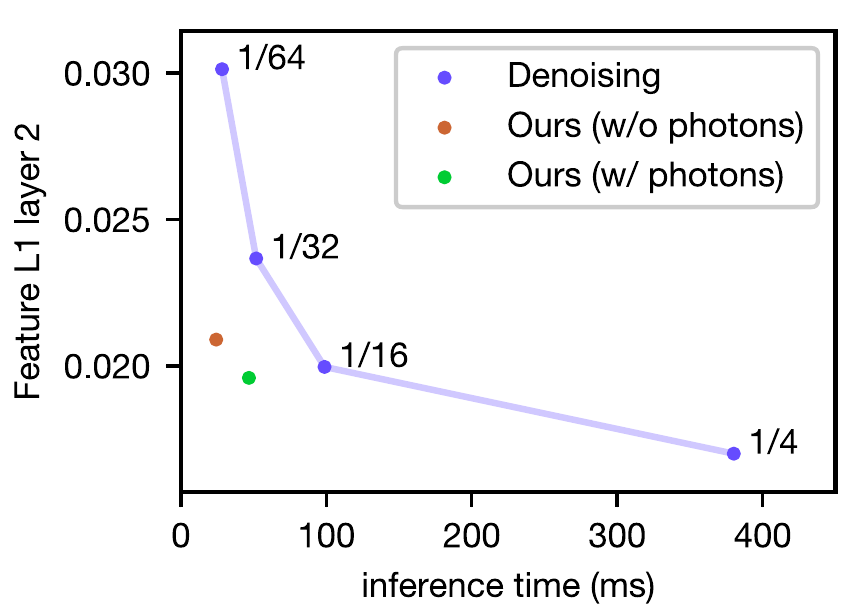}\hspace{0.5cm}%
	\includegraphics[width=0.45\linewidth]{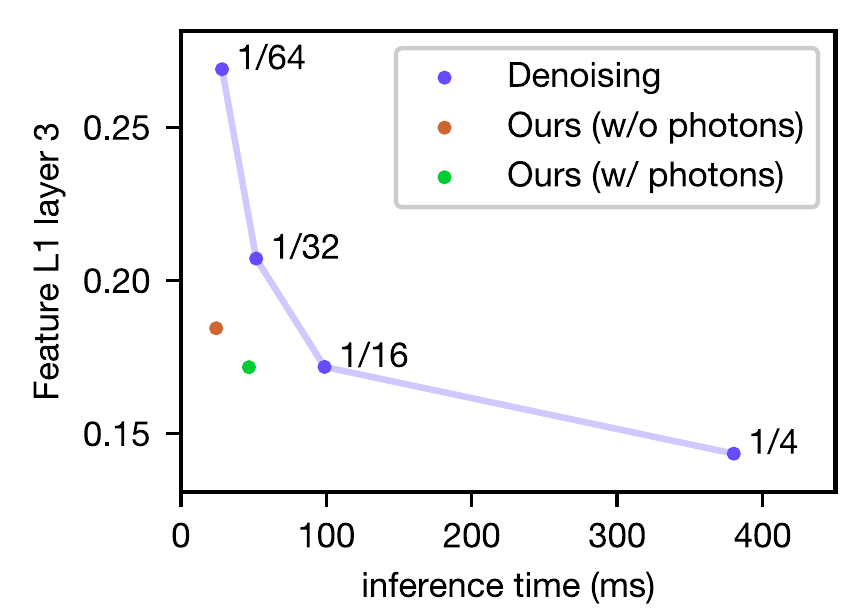}%
	\caption{\textbf{Dynamic Objects and Dynamic Lights.}
		This plot shows a quantitative comparison of our approach with the denoising baseline for different sample densities.
		We plot reconstruction accuracy over inference time for our experiment on dynamic objects and dynamic lights.
		The denoising labels refer to the ratio of pixels that are dropped.
		The layer indices (0--3) for the Feature L1 losses refer to outputs of the four major layers in the Inception v3 network.
	}
	\label{fig:denoisingd2}
\end{figure}

\def\w{0.158\linewidth}

\def\imagerow#1#2{
	\includegraphics[width=.12\linewidth]{#1/sv/#2} &
	\includegraphics[width=.12\linewidth]{#1/denoise/#2} &
	\includegraphics[width=.12\linewidth]{#1/denoise64/#2} &
	\includegraphics[width=.12\linewidth]{#1/cnn/#2} &
	\includegraphics[width=.12\linewidth]{#1/projection/#2} &
	\includegraphics[width=.12\linewidth]{#1/pointnet/#2} &
	\includegraphics[width=.12\linewidth]{#1/photon/#2} &
	\includegraphics[width=.12\linewidth]{#1/gt/#2} \\
}

\def\imagerowdiff#1#2{
	\includegraphics[width=.118\linewidth]{#1/projection/#2} &
	\includegraphics[width=.118\linewidth]{#1/projection_diff/#2} &
	\includegraphics[width=.118\linewidth]{#1/pointnet/#2} &
	\includegraphics[width=.118\linewidth]{#1/pointnet_diff/#2} &
	\includegraphics[width=.118\linewidth]{#1/photon/#2} &
	\includegraphics[width=.118\linewidth]{#1/photon_diff/#2} &
	\includegraphics[width=.118\linewidth]{#1/gt/#2} \\
}

\begin{figure}[p]
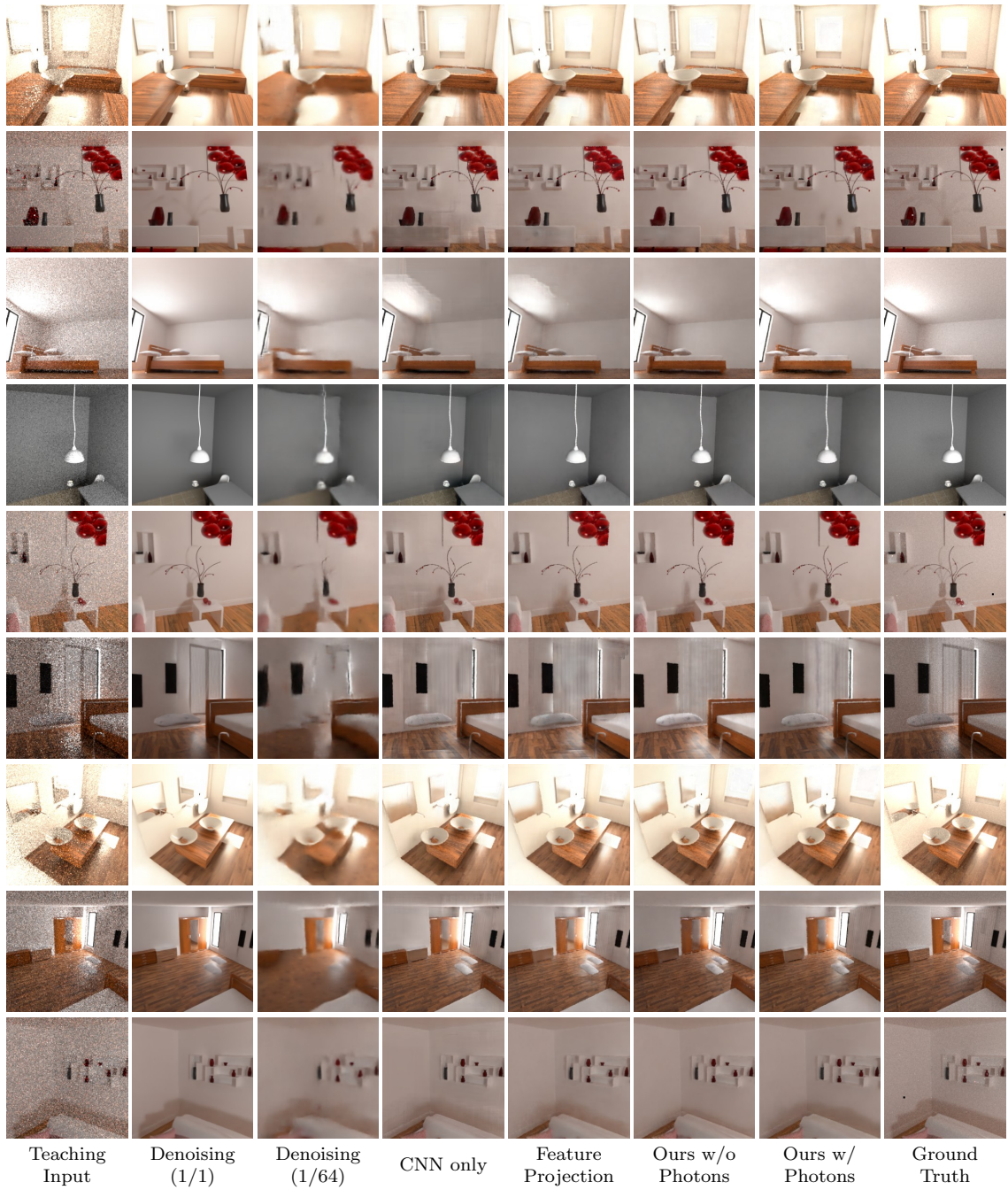

    \centering
    \setlength\tabcolsep{1pt} %
	\renewcommand{\arraystretch}{.7}
	\begin{tabular}{cccccccc}
		\imagerow{compressed/d1_256/it00300000}{00020.jpg}
		\imagerow{compressed/d1_256/it00300000}{00085.jpg}
		\imagerow{compressed/d1_256/it00300000}{00075.jpg}
		\imagerow{compressed/d1_256/it00300000}{00034.jpg}
		\imagerow{compressed/d1_256/it00300000}{00009.jpg}
		\imagerow{compressed/d1_256/it00300000}{00015.jpg}
		\imagerow{compressed/d1_256/it00300000}{00016.jpg}
		\imagerow{compressed/d1_256/it00300000}{00019.jpg}
		\imagerow{compressed/d1_256/it00300000}{00025.jpg}
		\footnotesize \makecell{Teaching\\ Input} &
		\footnotesize \makecell{Denoising\\  (1/1)} & 
		\footnotesize \makecell{Denoising\\  (1/64)}  & 
		\footnotesize \makecell{CNN only} & 
		\footnotesize \makecell{Feature \\ Projection} & 
		\footnotesize \makecell{Ours w/o\\ Photons} & 
		\footnotesize \makecell{Ours w/\\ Photons} &
		\footnotesize \makecell{Ground\\ Truth} \\
	\end{tabular}
    \caption{\textbf{Dynamic Objects and Fixed Lights.}
        Additional results for our method as well as for the baselines for dynamic objects and fixed lights, complementing \figref{fig:dynamic-scenes}. 
    }
    \label{fig:results-baselines}
\end{figure}

\def\imagerowdiffcompare#1#2{
	\includegraphics[width=\w]{#1/denoise4/#2} &
	\includegraphics[width=\w]{#1/denoise16/#2} &
	\includegraphics[width=\w]{#1/denoise32/#2} &
	\includegraphics[width=\w]{#1/denoise64/#2} &
	\includegraphics[width=\w]{#1/pointnet/#2} &
	\includegraphics[width=\w]{#1/photon/#2} \\
	\includegraphics[width=\w]{#1/denoise4_diff/#2} &
	\includegraphics[width=\w]{#1/denoise16_diff/#2} &
	\includegraphics[width=\w]{#1/denoise32_diff/#2} &
	\includegraphics[width=\w]{#1/denoise64_diff/#2} &
	\includegraphics[width=\w]{#1/pointnet_diff/#2} &
	\includegraphics[width=\w]{#1/photon_diff/#2} 
}

\begin{figure}[p]
    \centering
    \small
	\setlength\tabcolsep{1pt} %
		\begin{tabular}{cccccc}
		\imagerowdiffcompare{compressed/d1_256/it00300000}{00018.jpg}\\[0.5em]
		\imagerowdiffcompare{compressed/d1_256/it00300000}{00012.jpg}\\[0.5em]
		\imagerowdiffcompare{compressed/d1_256/it00300000}{00019.jpg}\\
		\footnotesize \makecell{Denoising (1/4)} & 
		\footnotesize \makecell{Denoising (1/16)} & 
		\footnotesize \makecell{Denoising (1/32)} & 
		\footnotesize \makecell{Denoising (1/64)} & 
		\footnotesize \makecell{Ours w/o photons} & 
		\footnotesize \makecell{Ours w/ photons} \\
	\end{tabular}
	\vspace{0.3em}
    \caption{\textbf{Dynamic Objects and Fixed Lights.}
        Predictions and error images with respect to ground truth for different denoising approaches and our approach for dynamic objects and fixed lights.
        Error plots are shown below the respective prediction.
    }
    \label{fig:error-comparison}
\end{figure}

\begin{figure}[p]
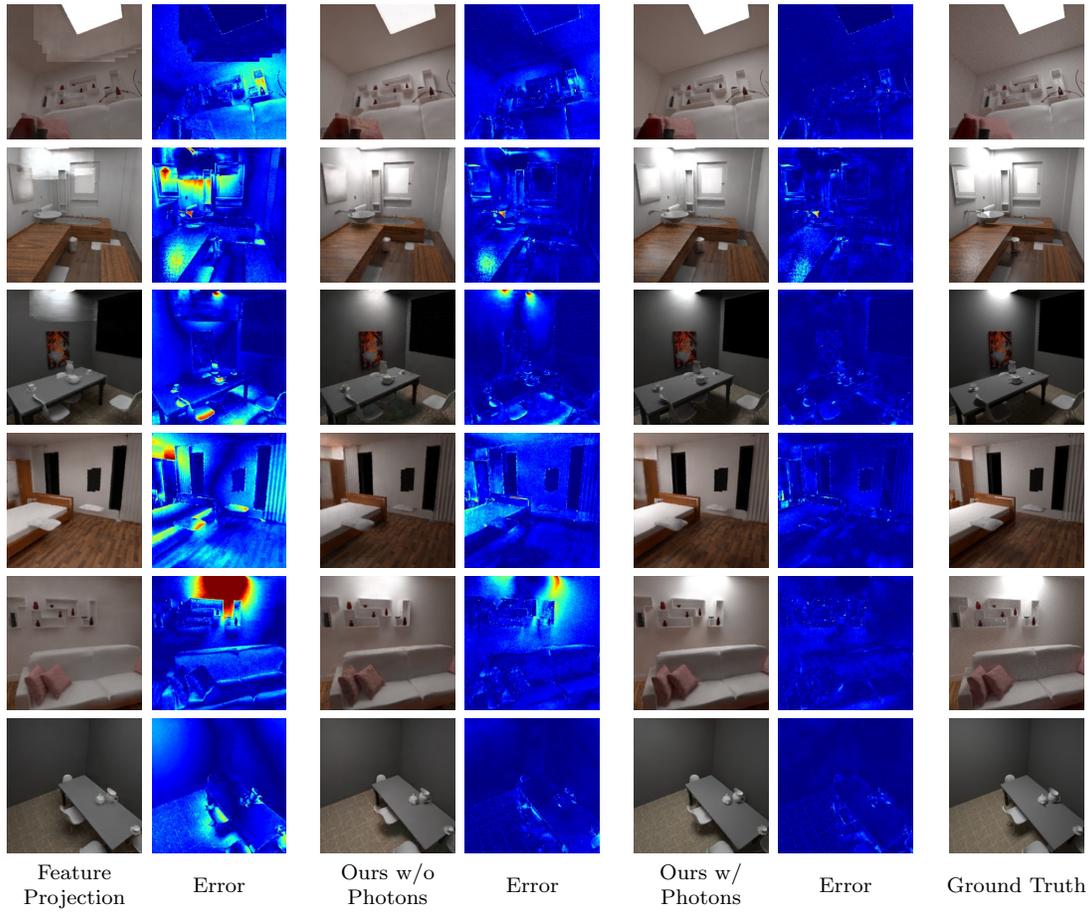

    \centering
    \small
	\setlength\tabcolsep{2pt} %
		\begin{tabular}{cc>{\hspace{9pt}}cc>{\hspace{9pt}}cc>{\hspace{9pt}}c}
		\imagerowdiff{compressed/d2_256/it00300000}{00001.jpg}
		\imagerowdiff{compressed/d2_256/it00300000}{00020.jpg}
		\imagerowdiff{compressed/d2_256/it00300000}{00110.jpg}
		\imagerowdiff{compressed/d2_256/it00300000}{00023.jpg}
		\imagerowdiff{compressed/d2_256/it00300000}{00045.jpg}
		\imagerowdiff{compressed/d2_256/it00300000}{00018.jpg}

		\footnotesize \makecell{Feature \\ Projection} & 
		\footnotesize Error &
		\footnotesize \makecell{Ours w/o\\ Photons} & 
		\footnotesize Error &
		\footnotesize \makecell{Ours w/\\ Photons} &
		\footnotesize Error &
		\footnotesize Ground Truth
	\end{tabular}
	\vspace{0.3em}
    \caption{\textbf{Dynamic Objects and Dynamic Lights.}
        Additional visual results for dynamic objects and dynamic lights, complementing \figref{fig:photon-importance}.
    }
    \label{fig:results-photon-diff}
\end{figure}

\begin{figure*}[p]
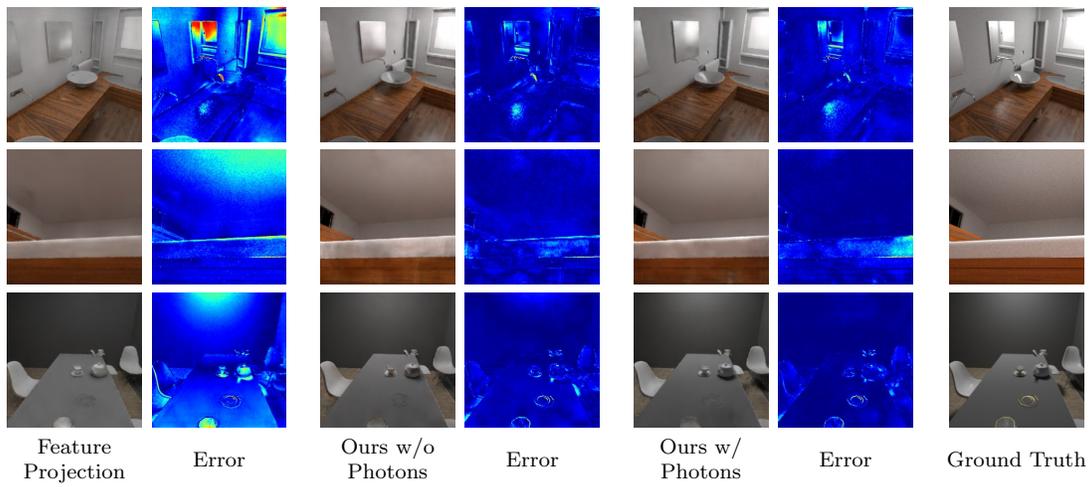

    \centering
    \small
	\setlength\tabcolsep{2pt} %
		\begin{tabular}{cc>{\hspace{9pt}}cc>{\hspace{9pt}}cc>{\hspace{9pt}}c}
		\imagerowdiff{compressed/d2_256/it00300000}{00088.jpg}
		\imagerowdiff{compressed/d2_256/it00300000}{00075.jpg}
		\imagerowdiff{compressed/d2_256/it00300000}{00122.jpg}
		\footnotesize \makecell{Feature \\ Projection} & 
		\footnotesize Error &
		\footnotesize \makecell{Ours w/o\\ Photons} & 
		\footnotesize Error &
		\footnotesize \makecell{Ours w/\\ Photons} &
		\footnotesize Error &
		\footnotesize Ground Truth
	\end{tabular}
	\vspace{0.3em}
    \caption{\textbf{Dynamic Objects and Dynamic Lights.}
    	Example scenarios that are challenging for our approach with dynamic objects and dynamic lights.
        We observe failure cases for specular materials and mirrors, when objects are close to the camera and in the presence of fine shadows.
    }
    \label{fig:failures}
\end{figure*}

\clearpage

\clearpage

\bibliographystyle{plain}
\bibliography{bibliography.bib}

\end{document}